\newtheorem{theorem}{Theorem}
\newtheorem{Corollary}{Corollary}
\newtheorem{Definition}{Definition}
\newtheorem{Lemma}{Lemma}
\newtheorem{Assumption}[theorem]{Assumption}
\newcommand{\CASE}[1]{\STATE \textbf{case} #1\textbf{} \begin{ALC@g}}
\newcommand{\ENDCASE}{\end{ALC@g}}
\begin{document}

\begin{frontmatter}

\title{On the Robot Assisted Movement in Wireless Mobile Sensor Networks}

\author[mss]{Sajal K. Das\fnref{mssfootnote}}%\corref{cor2}}
\ead{sdas@mst.edu}
\author[pwr]{Rafa\l{} Kapelko\corref{cor1}\fnref{pwrfootnote}}
\ead{rafal.kapelko@pwr.edu.pl}
%\fntext[scsfootnote]{Supported by Polish National Science Center (NCN) grant 2019/33/B/ST6/02988}
\fntext[mssfootnote]{This work of S. K. Das was partially supported by NSF grants CNS-1850851, OAC-1725755, OAC-2104078, and SCC-1952045.}
\fntext[pwrfootnote]{Supported by Polish National Science Center (NCN) grant 2019/33/B/ST6/02988.}
\cortext[cor1]{Corresponding author at: Department of Computer Science,
%Faculty of Fundamental Problems of Technology, 
Wroc{\l}aw University of Technology, 
 Wybrze\.{z}e Wyspia\'{n}skiego 27, 50-370 Wroc\l{}aw, Poland. Tel.: +48 71 320 33 62; fax: +48 71 320 07 51.}
 %\cortext[cor2]{Corresponding author at: School of Computer Science, Carleton University
%1125 Col. By Dr., Ottawa, ON K1S 5B6, Canada. Tel.: 613 520 2600 x 8090; fax:a 613 520 4334.}
\address[pwr]{Department of Fundamentals of Computer Science, 
%Faculty of Fundamental Problems of Technology, 
Wroc{\l}aw University of Science and Technology, Poland}
\address[mss]{Missouri University of Science and Technology, USA}
%-------------------------------------------------------------------------------
\begin{abstract}
This paper deals with random sensors initially randomly deployed on the line according to general random process
and on the plane according to two independent general random processes.
The mobile robot with carrying capacity $k$  placed at the origin point is to move the sensors to achieve the general scheduling requirement such as coverage, connectivity 
and thus to satisfy the desired communication property in the network.
We study tradeoffs between the  energy consumption in robot's movement, the numbers of sensors $n$, the sensor range $r$, the interference distance $s$, and the robot capacity $k$
until completion of the coverage simultaneously with interference scheduling task.
In this work, we obtain upper bounds for the energy consumption in robot's movement 
and obtain the sharp decrease in the total movement cost of the robot so as to provide the coverage simultaneously with interference
requirement.
\end{abstract} 

%-------------------------------------------------------------------------------
\begin{keyword}
  Sensors, Robot, Random process, Coverage, Interference
\end{keyword}

%-------------------------------------------------------------------------------
\end{frontmatter}
%\maketitle
\section{Introduction}
A \textbf{wireless sensor network (WSN)} typically consists of a large number of sensor nodes deployed randomly or according to some well defined distribution over a region of interest
to measure and observe the physical phenomena or events \cite{kumar2007,sajal2008, abbasi2009movement,spa_2013, mohamed2017}. There exists a wide variety of applications of WSNs such us border security and surveillance, wildlife habitat monitoring, 
environmental monitoring (e.g., underwater, earthquake or seismic activities),  intrusion detection, diagnostics in industrial process control, health structure of buildings or bridges, and so on. In some situations, the geographic terrain could be difficult to reach implying the manual deployment of sensors in deterministic patterns
is dangerous or even impossible. Moreover, due to the terrain, wind  and other factors, random deployment may be the only one option.

In WSNs, a fundamental problem to study is the \textbf{energy consumption.} It is well known that the mobile sensors consume much
more energy during the movement than that during the sensing or communication process \cite{ptas}.
Thus, how to minimize the transportation cost of sensors to provide the general scheduling requirement such as connectivity, coverage in the network has great significance.

In this study, we focus on \textit{the reallocation of random sensors} by \textbf{the mobile robot.}
Assume that $n$ sensors are initially randomly deployed on the line  by dropping them from an aircraft according to a general random process
and on the two-dimensional plane according to two independent general random processes.
The robot is placed at a start position and can move the sensors from any initial position to any final position on the line and on the two-dimensional plane.
\textit{The carrying capacity} $k$ of the robot is the maximum number of sensors the robot can carry at a time to deposit all or part of them at any time and to any suitable position it chooses
(see Assumption \ref{assumme:cappacity}).

The aim of this article is to introduce and analyze  the transportation cost for the robot assisted sensor displacement %(see Definition \ref{def:robot} and Definition \ref{def:t2autonomus}), 
and compare it with  the transportation cost for the autonomous sensor displacement %(see Definition \ref{def:movement} and Definition \ref{def-42})
to provide the general scheduling requirement in the network such as coverage, connectivity, or coverage simultaneously with interference. We compare \textit{the energy consumption in robot assisted sensor displacement} and the case \textit{each sensor moves autonomously} to \textbf{reduce the transportation cost} of mobile sensors
for coverage and interference requirement.
\subsection{Contributions of This Paper}
In this paper we introduce \textbf{novel robot assisted movement} in wireless mobile sensor networks to \textbf{reduce the energy consumption} of transportation cost for sensors. In particular, the mobile robot
with carrying capacity $k$ moves the sensors to provide the general scheduling requirement in the network (see Assumption \ref{assumme:cappacity} in Section \ref{sec:line}).

The objective is to compare the robot assisted transportation cost with the autonomous sensor displacement when the sensors are randomly placed on the line 
(see Definition \ref{def:robot} and Definition \ref{def:movement} in Section \ref{sec:line}) and on the plane
(see Definition \ref{def:t2autonomus} and Definition \ref{def-42} in Section \ref{sec:plane}) according to general random process.
To this aim, we make the following three novel theoretical contribution.
\begin{enumerate} 
\item For both sensors on the line and on the plane
we present the relationship between the transportation cost of the robot and the autonomous sensor displacement which is valid for every 
desired communication property (see Theorem \ref{theorem:robot} in Section \ref{sec:line} and Theorem \ref{theorem:robot_plane} in Section \ref{sec:plane})
\item When the $n$ sensors with identical sensing radius $r_1$ initially randomly displaced on the
$[0,\infty)$ according to Poisson process with arrival rate $\lambda > 0$ (see Definition \ref{def:sensing} and Definition \ref{def:interline} in Subsection \ref{sub:line_a})
we derive tradeoffs between  the energy consumption in robot's movement, the number of sensors $n$, the sensing radius $r_1$, interference distance $s$, the robot capacity $k$ until completion of the
coverage and interference requirement (see Theorem \ref{thm:inspiracja01} in Subsection \ref{sub:line_a} and Table \ref{tab:dwas} in Section \ref{sec:application} for a summary).
\item For $n$ sensors with identical square sensing radius $r_2$ 
that are randomly placed on the plane
according to two identical and independent Poisson processes 
each with arrival rate $\lambda>0$ 
(see Definition \ref{def:square} and Definition \ref{def:squaresquare} in Subsection \ref{sub:plane_a}) 
we derive tradeoffs between the energy consumption in robot's movement, the number of sensors $n$, the square sensing radius $r_2$, interference distance $s$, the robot capacity $k$ until completion of the
coverage and interference requirement (see Theorem \ref{thm:inspiracja02} in Subsection \ref{sub:plane_a} and Table \ref{tab:dwas} in Section \ref{sec:application} for a summary).
\end{enumerate}
\textbf{Table \ref{tab:dwas} summarized the results obtained for the random sensors on the line.}

When the $n$ sensors with identical sensing radius $r_1$ are randomly displaced on the
$[0,\infty)$ according to Poisson process with arrival rate $\lambda=n,$ we have the following results.
\begin{itemize}  
 \item Let $a\in[1,2].$ For the sensing radius $r_1=\frac{1}{2n}$ and the interference distance $s=\frac{1}{n}$, the expected $T^{(a,1)}_{(AS)}$ transportation cost for the autonomous sensor displacement 
 xis $\Theta\left(n^{1-\frac{a}{2}}\right)$
 (see Table \ref{tab:dwasak}); and
 the expected $T^{(a,1)}_{(RS)}$-transportation cost for the robot assisted sensor displacement is $\frac{O\left(n^{\frac{a}{2}}\right)}{k^{a}},$ 
 provided that $n^{1-\frac{1}{a}}\le k\le \sqrt{n}$
 (see Table \ref{tab:dwas}).
 
 Hence the robot with $\lceil\sqrt{n}\rceil$\textbf-\textbf{capacity 
 reduces the transportation cost  from} $\Theta\left(n^{1-\frac{a}{2}}\right)$ \textbf{to constant} $O(1)$ for $(r_1,s)$-coverage and interference requirement
(see Definition \ref{def:interline}).
 \item Let $a\ge 1.$  If the sensing radius $r_1>\frac{1}{2n}$ and the interference distance $s>\frac{1}{n}$, the expected $T^{(a,1)}_{(AS)}$ transportation cost for the autonomous sensor displacement is $\Theta(n)$
 (see Table \ref{tab:dwasak}).
 The expected $T^{(a,1)}_{(RS)}$-transportation cost for the robot assisted sensor displacement is $\frac{O\left(n^a\right)}{k^a},$ 
  provided that $n^{1-\frac{1}{a}}\le k\le n$
 (see Table \ref{tab:dwas}). 
 
 Therefore, 
 the robot with $n$-\textbf{capacity 
 reduces the transportation cost  from} $\Theta(n)$ \textbf{to constant} $O(1)$ for $(r_1,s)$-coverage and interference requirement
(see Definition \ref{def:interline}).
\end{itemize}
\textbf{Table \ref{tab:dwasasekek} summarized the results obtained for the sensors on the plane.}

When $n$ sensors with identical square sensing radius $r_2$ 
are randomly displaced on the plane
according to two identical and independent Poisson processes 
each with arrival rate $\lambda=\sqrt{n},$ we have the following results.
\begin{itemize}  
 \item Let $a\in\left[1,\frac{4}{3}\right].$ For the square sensing radius $r_1=\frac{1}{2\sqrt{n}}$ and the interference distance $s=\frac{1}{n}$, the expected $T^{(a,2)}_{(AS)}$ transportation cost for the autonomous 
 sensor displacement is $\Theta\left(n^{1-\frac{a}{4}}\right)$
 (see Table \ref{tab:dwasak}); and
 the expected $T^{(a,2)}_{(RS)}$-transportation cost for the robot assisted sensor displacement is $\frac{O\left(n^{\frac{3a}{4}}\right)}{k^{a}},$ 
 provided that $n^{1-\frac{1}{a}}\le k\le n^{\frac{1}{4}}$
 (see Table \ref{tab:dwasasekek}).
 
 Hence the robot with $\lceil n^{\frac{1}{4}}\rceil$\textbf-\textbf{capacity 
 reduces the transportation cost  from} $\Theta\left(n^{1-\frac{a}{4}}\right)$ \textbf{to} $O\left(n^{\frac{a}{2}}\right)$  for  $(r_2,s)$-coverage and interference requirement
(see Definition \ref{def:squaresquare}).
 \item Let $a\in\left[1,2\right].$  If the square sensing radius $r_2>\frac{1}{2n}$ and the interference distance $s>\frac{1}{n}$, the expected $T^{(a,1)}_{(AS)}$ transportation cost for the autonomous sensor displacement 
 is $\Theta(n)$
 (see Table \ref{tab:dwasak}).
 The expected $T^{(a,2)}_{(RS)}$-transportation cost for the robot assisted sensor displacement is $\frac{O\left(n^a\right)}{k^a},$ 
 provided that $n^{1-\frac{1}{a}}\le k\le n^{\frac{1}{2}}$
 (see Table \ref{tab:dwasasekek}). 
 
 Therefore, 
 the robot with $\lceil \sqrt{n}\rceil$-\textbf{capacity 
 reduces the transportation cost  from} $\Theta(n)$ \textbf{to constant} $n^{\frac{a}{2}}$ for $(r_2,s)$-coverage and interference requirement
(see Definition \ref{def:squaresquare}).
\end{itemize}

Similar \textbf{decrease in the transportation} cost also holds for all parameters $\lambda>0.$

The rest of the paper is organized as follows. Subsection~\ref{related-work} briefly summarizes the related works.
Section \ref{sec:line} analyzes the robot assisted sensor movement on the line.
Section \ref{sec:plane} deals with the robot assisted sensor reallocation on the plane. 
In Section \ref{sec:application} we apply the upper bounds from previous sections
for robot assisted movement to provide the coverage interference requirement on the line and on the plane.
%In Section \ref{sec:application} we apply the upper bounds for robot assisted movement
%to provide the coverage interference requirement. 
Section \ref{sec:discussion} presents further insights under various robots assisted scenarios while the numerical evaluation of Algorithms 
\ref{robot_assisted} and \ref{robot_assisted_plane}
are presented in Section \ref{sec:experiments}. The final section offers conclusions.

\subsection{Related Works}
\label{related-work}
Autonomous mobile robots have been extensively studied in the literature (e.g., see \cite{slam, khan_ama, 8301580, oscar, GM-VPC, complete_path}).
The problem of coverage path planning (CPP) for multiple cooperating mobile robots is addressed in~\cite{GM-VPC}.
In \cite{8301580}, the authors advance their previous theoretical work by conducting experiments 
on a generalized coverage optimization algorithm using a team of heterogeneous mobile robots.
The active simultaneous localization and mapping (SLAM) framework for a mobile robot to obtain a collision-free trajectory 
with good performance in SLAM uncertainty reduction and in an area coverage task is presented in \cite{slam}.
In \cite{oscar}, the author studies the problem of patrolling the border with a set of $k$ robots. 
In \cite{complete_path}, they presented a generalized complete coverage path planning (CCPP) algorithm and its implementation for a mobile robot.

The coverage problem in sensor networks has been the subject of extensive interest
(e.g., see \cite{kumar2007, younis2008strategies, sajal2008, tcs2009, percolation, salajkcover, siamcontrol2015, faultdas, li2016, TIAN, faultdas, Dobrev2017, kim2017, trans2018, ZHOU2019}).
The theoretical foundations for $k$-barrier coverage were developed in \cite{kumar2007}, while 
\cite{sajal2008} presents and compares several state-of-the-art algorithms and techniques to address the integrated coverage-connectivity issues in  WSNs. 
The analysis of unreliable sensors in a one-dimensional environment is considered in \cite{siamcontrol2015}. 
In \cite{Dobrev2017}, the authors addressed three optimization problems to achieve weak barrier coverage in WSNs.
The family of problems whose goal is to design a network with maximal connectedness subject to a fixed budget constraint is investigated in \cite{ZHOU2019}.

It is worth mentioning that, our work in the current paper is related to the series of papers of the autonomous sensors displacement (see \cite{spa_2013,kranakis_shaikhet,adhocnow2015_KK,kapelkokranakisIPL, KK_2016_cube,
ICDCNkapelko,pervasiveKAPELKO,kapelko_dmaa, ICDCN2020kapelko, fuchs2020, transaction2021}).
It includes research on barrier and area coverage \cite{spa_2013, kapelkokranakisIPL, KK_2016_cube, kapelko_dmaa, fuchs2020},
on interference \cite{kranakis_shaikhet}, coverage simultaneously with interference \cite{ICDCNkapelko,pervasiveKAPELKO}, interference simultaneously with connectivity \cite{ICDCN2020kapelko}
and coverage-connectivity for the range assignment \cite{transaction2021}.
In \cite{fuchs2020}, the authors revisited the asymptotics of a binomial and a Poisson sum that arose as (average) displacement costs when sensors are randomly placed in anchor positions.

Our investigation of robot assisted sensor displacement is inspired \cite{robot2020}, where the problem of robot
assisted restoration of barrier coverage is introduced in a deterministic
setting. In \cite{robot2020} the authors provide an optimal linear-time offline algorithm that gives a minimum-length trajectory for a single robot
that starts at the end of a barrier and achieves barrier coverage.

The novelty of the current paper lies in the introduction and investigation of the robot assisted model and comparison with the recent results for the autonomous sensor displacement
(see \cite{KK_2016_cube,pervasiveKAPELKO,ICDCN2020kapelko,fuchs2020}).
\section{Analysis of Robot Assisted Movement on the Line}
\label{sec:line}
Let $X_i$ be the position of the $i$-th sensor on the line $[0,\infty).$ Let $n$ sensors $X_1\le X_2\le\dots \le X_n$
be initially randomly placed on the line $[0,\infty)$ according to %some well defined 
\textit{general random process.}

Initial random placement of the sensors this way
%according to general random process  
does not guarantee such communication properties
as the coverage requirement, interference requirement, the coverage simultaneously with connectivity, etc.
Therefore, we would like to move the sensors from their initial random location to a new position so as 
to achieve the general scheduling requirement, thus providing the desired communication.

It is assumed that the \textit{mobile robot} located at the origin  moves the sensors.  We define the robot capacity as follows.
\begin{Assumption}[$k$-capacity]
\label{assumme:cappacity}
Assume that $k\in\mathrm{N}$ and $1\le k\le n.$  The carrying capacity $k$ of the robot is the maximum number of sensors the robot can carry at a time to deposit all or part of them at any time and to any suitable position it chooses. 
\end{Assumption}
Obviously the robot can deposit a subset of the sensors it was carrying; it may choose to pick up and carry new ones as long as it \textit{does not exceed} the value $k$.
We now define the transportation for the robot assisted displacement.
%\begin{Definition}[$RS$-movement]
%\label{def:robota}
%The $RS$-movement concerns the mobile robot with $k$-capacity is placed at the origin and $n$ sensors initially deployed on the half infinite line $[0,\infty)$
%according to general random process.
%The mobile robot moves the sensors $X_i$  to the arbitrary final position $X_i+M_i,$ provided that $i=1,2,\dots n.$
%\end{Definition}
\begin{Definition}[$T^{(a,1)}_{(RS)}$-transportation cost]
\label{def:robot}
Let $a\ge 1$ be a constant.
Let $T^{(a,1)}_{(RS)}$ be the total distance \textbf{to the power $\mathbf{a}$} travelled by the robot of $k$-capacity from the origin position
to the final location so as to move the sensors $X_i$  to the arbitrary final position $X_i+M_i,$
provided that $i=1,2,\dots n.$ 
\end{Definition}
In the next two definition, we recall the $T^{(a,1)}_{(AS)}$-transportation cost for the autonomous sensor displacement.
%\begin{Definition}[$AS$-movement]
%\label{def:movementa}
%The $AS$-movement concerns $n$ sensors initially deployed on the half infinite line $[0,\infty)$
%according to general random process.
%The sensor $X_i$ moves auto\-nomously to the arbitrary final position $X_i+M_i,$ provided that $i=1,2,\dots n.$
%Fix $a\ge 1.$ The $T_{a,AS}$-transportation cost for $AS$-movement is defined as the sum
%$T_{a,AS}=\sum_{i=1}^n |M_i|^a.$
%\end{Definition}
\begin{Definition}[$T^{(a,1)}_{(AS)}$-transportation cost]\label{def:movement}
Let $a\ge 1$ be a constant.
Assume that, for $i=1,2,\dots, n$ the sensor $X_i$ moves autonomously to the position $X_i+M_i.$ 
The $T^{(a,1)}_{(AS)}$-transportation cost for the autonomous sensor displacement is defined as the sum
$T^{(a,1)}_{(AS)}=\sum_{i=1}^n |M_i|^a.$
\end{Definition}
%This approach has been previously studied in
%$several papers.
In this section we would like to
compare $T^{(a,1)}_{(RS)}$-transportation cost for the robot assisted displacement with $T^{(a,1)}_{(AS)}$-trans\-por\-tation cost
for the autonomous sensor displacement 
to provide the general scheduling requirement in the network such as coverage,
connectivity, or coverage simultaneously with interference, etc.
%to satisfy the desired communication property in the network such as coverage, coverage together with interference, etc.
\subsection{Greedy Procedure}
\label{sub:preset}
This subsection presents the Greedy Procedure %(see Algorithm \ref{alg_anchor}) 
that explains the basic relationship between $T^{(1,1)}_{(RS)}$-transportation cost and $T^{(1,1)}_{(AS)}$-trans\-por\-ta\-tion cost
in Algorithm \ref{alg_anchor}.
Algorithm \ref{alg_anchor} concerns the mobile robot located at the position $Y_0$ provided that $Y_0\le Y_1\le Y_2\le\dots \le Y_l.$
The mobile robot collects all sensors $Y_1\le Y_2\le\dots \le Y_l$ 
and moves to the final destination $Y_1+M_1\le Y_2+M_2\le\dots \le Y_l+M_l.$ 
Algoritm \ref{alg_anchor} is \textit{greedy} in the sense that robot located at the position $Y_0$ \textit{collects all sensors} $Y_1\le Y_2\le\dots \le Y_l.$
Its analysis is crucial in deriving the results in the next subsection which also analyzes the mentioned relationship. 
%We now present and analyse  Greedy Procedure (see Algorithm \ref{alg_anchor}). 

%Fix $i\in\{1,2,\dots,\lfloor\frac{n}{k}\rfloor\},$ where $k$ is the robot capacity.
\vskip 0.3cm
\begin{algorithm}[H]
\caption{$GP(Y_0,Y_1, Y_2,\dots Y_l)$, $l\le$ $k$-capacity of robot, }
\label{alg_anchor}
\begin{algorithmic}[1]
 \REQUIRE The initial random location $Y_{0}\le Y_{1}\le Y_{2}\le \dots \le Y_{l}$ of $l+1$ sensors on the $[0,\infty)$;
 the robot with carrying capacity $k$ located at the position $Y_{0}.$
 %($X_1\le X_2\le\dots \le X_n$).
 %$n$ mobile sensors with identical sensing radius $r=\frac{y}{2n}$ placed uniformly and independently at random on the interval $[0,y].$
 \ENSURE  The final location of the sensors at the points
 $Y_{0},$ $Y_{1}+M_{1}\le$ $Y_{2}+M_{2}\le \dots$ $\le Y_{l}+M_{l};$ the robot located at the position $Y_{l}.$
 %\ASSUMPTION Fix $i\in\{1,2,\dots,\lfloor\frac{n}{k}\rfloor\},$ where $k$ is the robot capacity.
 \STATE{the robot starts from $Y_{0}$, moves forward and collect the sensors $Y_{1},Y_{2},\dots, Y_{l};$}
 \STATE{Set $q\leftarrow Y_{l};$ }
 \STATE{The robot displaces the sensors $Y_{1},Y_{2},\dots, Y_{l}$ 
 at the destination $Y_{1}+M_{1}\le$ $Y_{2}+M_{2}\le \dots$ $\le Y_{l}+M_{l};$ }
 \STATE{The robot walks  to the position $q;$}
 \end{algorithmic}
\end{algorithm}
\vskip 0.3cm
The following lemma proves the upper bound on the distance travelled by the robot in Algorithm \ref{alg_anchor}.
\begin{Lemma}
\label{thm:lema}
Fix $l\le k\le n.$
For $i=1,2,\dots, l$ assume that the sensor $Y_i$ moves to the arbitrary position $Y_i+M_i$
and the robot with carrying capacity $k$ is located at the position $Y_{0}.$
Then the  total distance moved by the robot according to Algorithm \ref{alg_anchor} is at most
$$2|M_{1}|+ 2|M_{l}| +3(Y_{l}-Y_{0}).$$
\end{Lemma}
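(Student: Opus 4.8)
The plan is to split the robot's trajectory in Algorithm~\ref{alg_anchor} into its two phases and bound each separately. Phase one is line~1, where the robot walks from $Y_0$ rightward, picking up $Y_1,\dots,Y_l$ (legal since $l\le k$), and stops at $Y_l$; this contributes exactly $Y_l-Y_0$. Phase two is lines~3--4, where the robot, now at $q=Y_l$ and carrying all $l$ sensors, deposits them at $Y_1+M_1\le Y_2+M_2\le\dots\le Y_l+M_l$ and then walks back to $q=Y_l$.

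For phase two I would argue as follows. Write $P=Y_l$, $A=Y_1+M_1$, $B=Y_l+M_l$. Because the target positions are already sorted, all $l$ of them lie in $[A,B]$, and both endpoints $A,B$ are themselves targets. Hence the concrete trajectory ``from $P$ go left to $\min\{P,A\}$, then right to $\max\{P,B\}$, then left back to $P$'' is feasible: it sweeps the whole interval $[\min\{P,A\},\max\{P,B\}]\supseteq[A,B]$, so it passes through every target and can drop the matching sensor there. Its length is $2\bigl(\max\{P,B\}-\min\{P,A\}\bigr)$, which is therefore an upper bound on the distance travelled in phase two.

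It then remains to estimate the two extremes in terms of $Y_1,Y_l,M_1,M_l$. One has $\max\{P,B\}=\max\{Y_l,\,Y_l+M_l\}\le Y_l+|M_l|$, and, since $Y_1\le Y_l$, $\min\{P,A\}=\min\{Y_l,\,Y_1+M_1\}\ge\min\{Y_1,\,Y_1+M_1\}\ge Y_1-|M_1|$. Plugging these in, phase two costs at most $2\bigl(Y_l+|M_l|-(Y_1-|M_1|)\bigr)=2|M_1|+2|M_l|+2(Y_l-Y_1)$. Adding the phase-one cost $Y_l-Y_0$ and using $Y_0\le Y_1$, which gives $Y_l-Y_1\le Y_l-Y_0$, yields the claimed bound $2|M_1|+2|M_l|+3(Y_l-Y_0)$.

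I expect the only delicate point to be the feasibility claim in phase two, namely that merely reaching the two extreme targets $A$ and $B$ via one left--right--left sweep is enough to deliver all sensors correctly: this rests on the capacity hypothesis $k\ge l$ letting the robot hold all sensors at once, together with the monotonicity $Y_1+M_1\le\dots\le Y_l+M_l$, so that no target is ever skipped and no re-pickup is ever needed. Everything else is a short computation with $\max$, $\min$ and the triangle inequality.
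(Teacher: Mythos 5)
Your proposal is correct and follows essentially the same route as the paper: split the cost into the collection walk $Y_l-Y_0$ plus a delivery sweep whose length is $2\bigl(\max\{Y_l,\,Y_l+M_l\}-\min\{Y_l,\,Y_1+M_1\}\bigr)\le 2|M_1|+2|M_l|+2(Y_l-Y_1)$, then use $Y_0\le Y_1$. The only (harmless) difference is that you bound all sign cases uniformly via $\max/\min$, whereas the paper's proof computes the explicit trajectory in the worst case $M_1<0$, $M_l>0$ and notes the other cases are no worse.
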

Before proving Lemma \ref{thm:lema}, notice that the movement of the robot with carrying capacity $k\ge l$ in Algorithm \ref{alg_anchor} is upper bounded by the
movements $|M_{1}|,$ $|M_{l}|$ and $(Y_{l}-Y_{0}).$
Also, observe that $|M_{1}|$ and $|M_{l}|$
are the movements of the first and last sensors in the sequence 
$Y_{1}\le Y_{2}\le \dots \le Y_{l}.$
The component  $(Y_{l}-Y_{0})$ is independent on the movement and depends only on the initial random location of the first and last sensors in the sequence
$Y_{0}\le Y_{2}\le \dots \le Y_{l}.$
We are now ready to prove Lemma \ref{thm:lema}.
%Before starting the proof of Lemma \ref{thm:lema}, we briefly discuss one technical issue in step $(1)$ of
%Algorithm \ref{alg_anchor}.
\begin{proof}
Let $T_{(l)}$ be the  total distance moved by the robot according to Algorithm \ref{alg_anchor}
in our Greedy Procedure. We would like to upper bound $T_{(l)}.$ 
Observe that the worst case occurs when $M_{1}<0$ and $M_{l}>0$,
i.e., $Y_{1}+M_{1}\le Y_{1}$ and $Y_{l}\le Y_{l}+M_{l}.$

In this case the robot first moves left-to-right from $Y_{0}$ to $Y_{l}$ to collect the sensors $Y_1\le Y_2\le\dots\le Y_l.$
Then, the robot moves left-to-right from $Y_l$ to $Y_l+M_l$ and displaces the sensor $Y_l$ at the final position $Y_l+M_l.$
Then, the robot moves right-to-left from $Y_{l}+M_{l}$ to $Y_{1}+M_{1}    $ and displaces the sensors $Y_{l-1}\ge Y_{l-2}\ge\dots\ge Y_1$
at the final positions $Y_{l-1}+M_{l-1}\ge Y_{l-2}+M_{l-2}\ge\dots\ge Y_1+M_1.$
Finally,  the robot moves left-to-right  from $Y_{1}+M_{1}$  to $Y_{l}.$ Hence,
%\begin{align*}
$$
T_{(l)}=(Y_{l}-Y_{0})+2|M_{l}|+(Y_{l}-Y_{1})+ 2|M_{1}|+(Y_{l}-Y_{1})
%\label{align:asa01}
\le 2|M_{1}|+2|M_{l}|+3(Y_{l}-Y_{0}).
$$
%\end{align*}
This completes the proof of Lemma \ref{thm:lema}.
\end{proof}
\subsection{Main Results on the Line}
\label{sub:line}
In this subsection we analyze and compare the transportation cost for the robot assisted movement with the autonomous sensor displacement when the sensors are on the line.

Let $X_1\le X_2\le\dots \le X_n$ be the initial random positions of $n$ sensors on the $[0,\infty)$ according to general random process. 
Assume that, for $i=1,2,\dots n$ the sensors $X_i$ moves to the arbitrary final position
$X_i+M_i.$

In this subsection we restrict our analysis to \textit{optimal movement}. Recall that by the simple monotonicity lemma, 
no sensor $X_i$ is ever placed before sensor $X_j,$ for all $i<j,$ i.e.,
$$X_1+M_1\le X_2+M_2\le\dots \le X_n+M_n.$$

Let $k$ be the capacity of the robot located at the origin $0$. Consider the following sequence of greedy algorithms $GM_j(k,n),$ where $j\in\{0,1,\dots,k-1\}$
(see Algorithm \ref{robot_assisted}). 
%Notice that Algorithm \ref{robot_assisted} is application of Greedy Procedure (Algorithm \ref{alg_anchor}) to subset of size
%$j+1$  provided that $0<j<k+1$ (see step $(5)$), some subsets of size $k+1$ (see steps $(7-8)$) and subset of size $n-j-k\left\lfloor \frac{n-j}{k}\right\rfloor+1$
%provided that $j+k\left\lfloor \frac{n-j}{k}\right\rfloor < n$ (see step $(11)$).
\begin{algorithm}
\caption{$GM_j(k,n)$ Greedy Movement,\,\, $j\in\{0,1,\dots, k-1\},\,\,$ $k$-capacity of robot, $n$-number of sensors.} %1\le k\le n,\,\, k\in N$.}
\label{robot_assisted}
\begin{algorithmic}[1]
 \REQUIRE The initial random location $X_1\le X_2\le \dots \le X_n$ of the $n$ sensors on the $[0,\infty)$ according to general random process.
 The mobile robot with carrying $k$-capacity  located at the origin $0.$
 \ENSURE  The final positions of the sensors at the location\\ $X_1+M_1\le X_2+M_2\le\dots \le X_n+M_n.$
 %\STATE{sort the initial locations of sensors; the locations after sorting $x_1,x_2,\dots x_n,$  $x_1\le x_2\le\dots \le x_n.$}
 \IF{$j=0$}
 \STATE{do nothing;}
 \ELSE
 \STATE{Set $X_{0}\leftarrow 0;$ } 
  \STATE{Execute $GP(X_{0}, X_1, X_2,\dots, X_j);$}
 \ENDIF
 \FOR{$i=1$  \TO $\left\lfloor \frac{n-j}{k}\right\rfloor$ } 
% \STATE{ 
 %\FOR{$i=1$  \TO $\sqrt{n}$ } 
 \STATE{Execute $GP\left(X_{j+k(i-1)}, X_{j+k(i-1)+1},\dots, X_{j+k(i-1)+k}\right);$}
 \ENDFOR
 \IF{$j+k\left\lfloor \frac{n-j}{k}\right\rfloor < n$}
 \STATE{Execute $GP\left(X_{j+k\left\lfloor \frac{n-j}{k}\right\rfloor}, 
 X_{j+k\left\lfloor \frac{n-j}{k}\right\rfloor+1},\dots, X_n\right);$}
 \ELSE
 \STATE{do nothing;}
 \ENDIF
\end{algorithmic}
\end{algorithm}

Observe that Algorithm $GM_j(k,n)$ runs in rounds. It is a sequence of Greedy Procedures for $l=j\le k$ when $j>0$ (Steps $(1-6)$; $l=k$ (Steps $(7-9)$); and for $l=n-j-k\left\lfloor \frac{n-j}{k}\right\rfloor\le n-j-k\left( \frac{n-j}{k}-1\right)=k$ 
 when $j+k\left\lfloor \frac{n-j}{k}\right\rfloor < n$ (Steps $(10-11)$).
Thus, Algorithm \ref{robot_assisted} moves the sensors $X_1\le X_2\le \dots \le X_n$ to the final location
 $X_1+M_1\le X_2+M_2\le\dots \le X_n+M_n.$ Hence, it is correct.

%We are now ready to prove Theorem \ref{theorem:robot}.
We note that the presented relationship between the transportation cost of the robot and the autonomous sensor displacement in Theorem \ref{theorem:robot} is valid for every 
desired communication property.
%Theorem \ref{theorem:robot}.
\begin{theorem}
\label{theorem:robot}
Fix $1\le k\le n.$ %Let $T_{(AS)}=\sum_{i=1}^{n}|M_i|$ be the transportation cost for autonomous sensor displacement in Algorithm \ref{robot_assisted}
Let $j\in\{0,1,\dots,k-1\}.$
Let $T^{(a,1)}_{(j,RS)}$ be $T^{(a,1)}_{(RS)}$-trans\-por\-tation cost in algorithm $GM_j(k,n).$ 
Then, $$\min_{0\le j\le k-1}T^{(a,1)}_{(j,RS)}$$ is at most
\begin{equation}
\label{aligna1} 
6^aX^a_n+\left(4\left\lfloor\frac{n}{k}\right\rfloor+16\right)^{a-1}\left(2|M_1|^a+2|M_n|^a+\frac{4T^{(a,1)}_{(AS)}}{k}\right).
%\text{\,\,\,\,when\,\,} a\in[1,\infty). 
\end{equation}
\end{theorem}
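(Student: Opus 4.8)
The plan is to analyze a single run of $GM_j(k,n)$ by summing the per-round bound from Lemma~\ref{thm:lema}, and then to average (or minimize) over $j\in\{0,1,\dots,k-1\}$ so that the sensor-dependent boundary terms collapse. First I would fix $j\ge 1$ (the case $j=0$ is handled by dropping the first call) and name the blocks: block $0$ consists of $X_1,\dots,X_j$ anchored at $X_0=0$; block $i$ (for $1\le i\le\lfloor (n-j)/k\rfloor$) consists of the $k+1$ sensors $X_{j+k(i-1)},\dots,X_{j+k(i-1)+k}$; and possibly a final short block. For each call $GP(Y_0,\dots,Y_l)$ Lemma~\ref{thm:lema} gives a travel bound $2|M_{\text{first}}|+2|M_{\text{last}}|+3(Y_l-Y_0)$, where here $M_{\text{first}},M_{\text{last}}$ are the displacements of the first and last sensors of that block, and $Y_l-Y_0$ is a gap between consecutive block endpoints. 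Since consecutive blocks overlap at a shared endpoint, the telescoping sum of the $3(Y_l-Y_0)$ terms is at most $3X_n$, contributing the $6^aX_n^a$ after raising to the power $a$ (using $(3X_n)^a\le 6^aX_n^a$, with room to spare).

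The next step is to pass from the sum of the per-round travel distances to the $a$-th power. Writing $T^{(a,1)}_{(j,RS)}\le \sum_{\text{rounds}} (\text{round travel})^a$ is not quite what Lemma~\ref{thm:lema} gives directly — the lemma bounds the travel in one round, and the transportation cost to the power $a$ in that round is at most that travel to the power $a$ — so I would first bound the round-$a$-cost by $\big(2|M_{\text{first}}|+2|M_{\text{last}}|+3(Y_l-Y_0)\big)^a$, then use convexity of $t\mapsto t^a$ in the form $(u_1+\dots+u_m)^a\le m^{a-1}(u_1^a+\dots+u_m^a)$ to split each round's contribution. Summing over the at most $\lfloor n/k\rfloor+2$ rounds and applying the power-mean inequality once more across rounds (again paying a factor that is the number of rounds to the power $a-1$) produces the prefactor $\big(4\lfloor n/k\rfloor+16\big)^{a-1}$; the constants $4$ and $16$ are slack chosen to absorb the three-way split inside each round and the $O(1)$ extra rounds. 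The $3(Y_l-Y_0)$ pieces are grouped separately and telescoped before taking powers, which is why they come out as the single clean term $6^aX_n^a$ rather than being spread across the prefactor.

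The averaging step is where the displacement terms are tamed. For a fixed $j$, the bound from the previous paragraph involves $\sum_{i}\big(|M_{\text{first}(i)}|^a+|M_{\text{last}(i)}|^a\big)$, i.e.\ the $a$-th powers of displacements of the $2(\lfloor n/k\rfloor+2)$ block-endpoint sensors for that particular $j$. Crucially, as $j$ ranges over $\{0,1,\dots,k-1\}$ the block boundaries shift, so each sensor index $X_i$ is a block endpoint for only $O(1)$ values of $j$ — more precisely, summing over all $j$, every $|M_i|^a$ appears at most a bounded number of times, except $|M_1|^a$ and $|M_n|^a$ which are always endpoints. Hence $\sum_{j=0}^{k-1}\sum_{i}\big(|M_{\text{first}(i)}|^a+|M_{\text{last}(i)}|^a\big)\le c\,T^{(a,1)}_{(AS)}+k\big(2|M_1|^a+2|M_n|^a\big)$ for an absolute constant $c$, and dividing by $k$ (since the minimum over $j$ is at most the average) yields the $\tfrac{4T^{(a,1)}_{(AS)}}{k}$ term together with the leftover $2|M_1|^a+2|M_n|^a$. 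I would then combine: $\min_j T^{(a,1)}_{(j,RS)}\le \frac1k\sum_j T^{(a,1)}_{(j,RS)}$, bound each summand as above, and collect terms to reach \eqref{aligna1}.

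\textbf{Main obstacle.} The delicate point is the bookkeeping in the averaging step: one must verify that the ``first'' and ``last'' sensors of the blocks, as $j$ varies, cover each index with multiplicity $O(1)$ and that the shared-endpoint overlap between consecutive blocks is correctly accounted (a sensor that is ``last'' of block $i$ is ``first'' of block $i+1$), so that the claimed constant in front of $T^{(a,1)}_{(AS)}/k$ is really $4$ and not something $j$-dependent. Getting the telescoping of the $(Y_l-Y_0)$ gaps to land on $X_n$ rather than a multiple of it, and checking that the short initial/final blocks do not inflate the round count beyond $\lfloor n/k\rfloor+2$, are the other places where the stated constants $6$, $4$, $16$ have to be matched exactly rather than merely up to a constant.
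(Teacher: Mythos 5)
Your proposal is correct and follows essentially the same route as the paper's own proof: bound each round by Lemma~\ref{thm:lema}, telescope the $3(Y_l-Y_0)$ gaps to $3X_n$, apply the power-mean inequality (\ref{eq:mean}) in two stages to obtain the $\left(4\left\lfloor n/k\right\rfloor+16\right)^{a-1}$ prefactor, and then use $\min_j \le \frac{1}{k}\sum_j$ together with the observation that each interior index is a block endpoint for only $O(1)$ values of $j$ (coefficient $4$ on $\sum_i|M_i|^a$, with $|M_1|^a,|M_n|^a$ appearing for every $j$), which is exactly the counting the paper encodes in Inequality (\ref{eq:sum22}) and illustrates in Figure~\ref{fig:algos}. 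The bookkeeping you flag as the main obstacle is precisely what the paper verifies there, and your outlined counting yields the stated constants.
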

\begin{proof}
Let us recall the inequality between general means.
Assume that $a\ge 1.$  and $b_1, b_2, \dots b_l$ are positive. Then
\begin{equation}
\label{eq:mean}
\frac{b_1+b_2+\dots +b_l}{l}\le \left(\frac{b_1^a+b_2^a+\dots +b_l^a}{l}\right)^{\frac{1}{a}}
\end{equation}
(see \cite[Theorem 1, Section 2.14.2]{mitrinovic}).

By Definition~\ref{def:t2autonomus},
\begin{equation}
\label{eq:datask}
T^{(a,1)}_{(AS)}=\sum_{i=1}^{n}|M_i|^a
\end{equation}
is the transportation cost for the autonomous sensor displacement in Algorithm \ref{robot_assisted}.
 %TO BE COMPLETED
 %The general strategy of our proof of Theorem \ref{theorem:robot} is the following.
 Let $T^{(a,1)}_{(j,RS)}$ be $T^{(a,1)}_{(RS)}$-trans\-por\-tation cost in Algorithm $GM_j(k,n)$
 %Let $T_{(j,RS)}$ be the transportation cost Algorithm $GM_j(k,n)$ 
 for $j=0,1,\dots,k-1.$ 
 From Lemma \ref{thm:lema}, we have the following upper bound
 %\begin{align*}
 $$
 T^{(a,1)}_{(j,RS)}\le
  \left(A_j+B_j+2\sum_{i=1}^{\left\lfloor\frac{n-j}{k}\right\rfloor}\left(\left|M_{j+k(i-1)+1}\right|+\left|M_{j+ki}\right|\right)+C_j+D_j+3X_n\right)^a,
 $$
 %\end{align*}
 where
\begin{align*}
A_j=\begin{cases}
0\,\,\,&\text{if}\,\,\ j=0 ,\\
2|M_1|\,\,\,&\text{if}\,\,\ j>0,
%2|M_1|+2|M_j|\,\,\,&\text{if}\,\,\ j>1,
\end{cases}
\,\,\,\,\,\,\,\,\,\,\,\,
B_j=\begin{cases}
0\,\,\,&\text{if}\,\,\ j=0,1 ,\\
2|M_j|\,\,\,&\text{if}\,\,\ j>1,
\end{cases}
\end{align*}

 $$C_j=\begin{cases}
0\,\,\,&\text{if}\,\,\ j+k\left\lfloor \frac{n-j}{k}\right\rfloor=n ,\\
2|M_n|\,\,\,&\text{if}\,\,\ j+k\left\lfloor \frac{n-j}{k}\right\rfloor\le n-1,
\end{cases}
$$
 $$D_j=\begin{cases}
0\,\,\,&\text{if}\,\,\ j+k\left\lfloor \frac{n-j}{k}\right\rfloor=n-1, n,\\
2|M_{j+k\left\lfloor \frac{n-j}{k}\right\rfloor+1|}|\,\,\,&\text{if}\,\,\ j+k\left\lfloor \frac{n-j}{k}\right\rfloor < n-1.
\end{cases}
$$
Applying Inequality (\ref{eq:mean}) we get
 %\begin{align*}
 $$
 %& \nonumber 
 T^{(a,1)}_{(j,RS)}\le 6^aX_n^a
  + 2^{a-1}\left(A_j+B_j+2\sum_{i=1}^{\left\lfloor\frac{n-j}{k}\right\rfloor}\left(\left|M_{j+k(i-1)+1}\right|+\left|M_{j+ki}\right|\right)+C_j+D_j\right)^a.
 %\end{align*}
 $$
Once again, applying Inequality (\ref{eq:mean}) we have
 \begin{align}
\nonumber T^{(a,1)}_{(j,RS)}&\le6^aX^a_n+\left(4\left\lfloor\frac{n-j}{k}\right\rfloor+16\right)^{a-1}\Big(2\sum_{i=1}^{\left\lfloor\frac{n-j}{k}\right\rfloor}\left(\left|M_{j+k(i-1)+1}\right|^a+\left|M_{j+ki}\right|^a\right)\Big)\\
\label{eq:oszacow} & +\left(4\left\lfloor\frac{n-j}{k}\right\rfloor+16\right)^{a-1}\Big(2(A_j/2)^a+2(B_j/2)^a+2(C_j/2)^a+2(D_j/2)^a\Big).
 \end{align}
Let us now make the following important observation
%\begin{align}
\begin{equation}
 \sum_{j=0}^{k-1}T^{(a,1)}_{(j,RS)}\le k6^aX^a_n+ \left(4\left\lfloor\frac{n}{k}\right\rfloor+16\right)^{a-1}\\
 %&\nonumber\le\sum_{j=0}^{k-1} (A_j+2\sum_{i=1}^{\left\lfloor\frac{n-j}{k}\right\rfloor}\left(\left|M_{j+k(i-1)+1}\right|+\left|M_{j+ki}\right|\right)+B_j+3X_n)\\
 \label{eq:sum22}\left( 2k|M_1|^a+4\sum_{i=1}^n|M_i|^a+2k|M_n|^a\right).
 \end{equation}
 Figure \ref{fig:algos} illustrates Inequality (\ref{eq:oszacow}) for $n=8$ and $k=3.$
 Namely $T_{(j,RS)}$ is the transportation cost Algorithm $GM_j(k,8)$ for 
 $j=0,1,\dots,3.$
 Let us recall that,
   for $i=1,2,\dots, 8$ the sensor $X_i$ moves to the position $X_i+M_i.$ 
 In Figure \ref{fig:algos}, the black dots indicate the movements $2|M_i|^a$ which consider  in the upper bound 
  estimation (\ref{eq:oszacow}) of  $T^{(a,1)}_{(j,RS)}$ for Algorithm $GM_j(k,8),$ provided $j=0,1,\dots,3.$  
  (Obviously, the white dots indicate the movements $2|M_i|^a$ which don't appear 
  in the upper bound 
  estimation (\ref{eq:oszacow}) of  $T^{(a,1)}_{(j,RS)}$ for Algorithm $GM_j(k,8),$ provided $j=0,1,\dots,3$). 
   Now, two times sum of the black dots in Figure \ref{fig:algos}
   is equal to $$6|M_1|^a+4\sum_{i=1}^8|M_i|^a+6|M_8|^a.$$
   This confirms the movements $|M_1|^a,$ $|M_2|^a,\dots$ $|M_n|^a$ in the upper bound Inequality (\ref{eq:sum22}) for $n=8$ and $k=3$ very well.
 %\begin{figure*}[ht  %\caption{Movement of $8$ sensors for $3-$capacity robot according to algorithm $GM_j(3,8),$ when $j\in\{0,1,2\}$}
    \vspace{-5pt}
  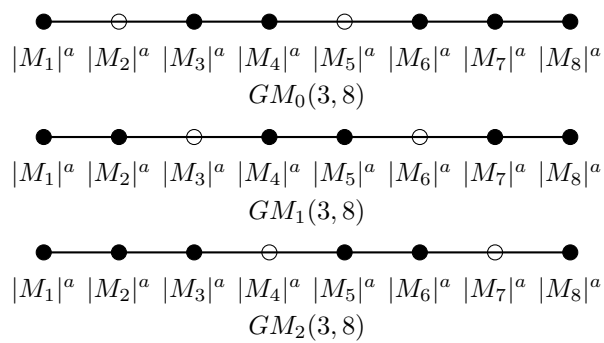
\begin{figure}[H]
  %oo\label{fig:algos}
   %\vspace{-5pt}
  \vskip 0.3cm
  \center
 \begin{tikzpicture}
%\caption{Movement of $8$ sensors for $3-$capacity robot according to algorithm $GM_j(3,8),$ when $j\in\{0,1,2\}$}
\filldraw[fill=black] (0,0) circle (0.1cm);
\filldraw[fill=white] (1,0) circle (0.1cm);
\filldraw[fill=black] (2,0) circle (0.1cm);
\filldraw[fill=black] (3,0) circle (0.1cm);
\filldraw[fill=white] (4,0) circle (0.1cm);
\filldraw[fill=black] (5,0) circle (0.1cm);
\filldraw[fill=black] (6.0,0) circle (0.1cm);
\filldraw[fill=black] (7,0) circle (0.1cm);
\draw[thick,] (0,0) -- (7,0);
\draw (0,-0.5) node {$|M_1|^a$};
\draw (1,-0.5) node {$|M_2|^a$};
\draw (2,-0.5) node {$|M_3|^a$};
\draw (3,-0.5) node {$|M_4|^a$};
\draw (4,-0.5) node {$|M_5|^a$};
\draw (5,-0.5) node {$|M_6|^a$};
\draw (6,-0.5) node {$|M_7|^a$};
\draw (7,-0.5) node {$|M_8|^a$};
\draw (3.5,-1) node {$GM_0(3,8)$};
\end{tikzpicture}

\vskip 0.1cm
\begin{tikzpicture}
\filldraw[fill=black] (0,0) circle (0.1cm);
\filldraw[fill=black] (1,0) circle (0.1cm);
\filldraw[fill=white] (2,0) circle (0.1cm);
\filldraw[fill=black] (3,0) circle (0.1cm);
\filldraw[fill=black] (4,0) circle (0.1cm);
\filldraw[fill=white] (5,0) circle (0.1cm);
\filldraw[fill=black] (6,0) circle (0.1cm);
\filldraw[fill=black] (7,0) circle (0.1cm);
\draw[thick,] (0,0) -- (7,0);
\draw (0,-0.5) node {$|M_1|^a$};
\draw (1,-0.5) node {$|M_2|^a$};
\draw (2,-0.5) node {$|M_3|^a$};
\draw (3,-0.5) node {$|M_4|^a$};
\draw (4,-0.5) node {$|M_5|^a$};
\draw (5,-0.5) node {$|M_6|^a$};
\draw (6,-0.5) node {$|M_7|^a$};
\draw (7,-0.5) node {$|M_8|^a$};
\draw (3.5,-1) node {$GM_1(3,8)$};
\end{tikzpicture}
%\end{figure}
\vskip 0.1cm
\begin{tikzpicture}
\filldraw[fill=black] (0,0) circle (0.1cm);
\filldraw[fill=black] (1,0) circle (0.1cm);
\filldraw[fill=black] (2,0) circle (0.1cm);
\filldraw[fill=white] (3,0) circle (0.1cm);
\filldraw[fill=black] (4,0) circle (0.1cm);
\filldraw[fill=black] (5,0) circle (0.1cm);
\filldraw[fill=white] (6,0) circle (0.1cm);
\filldraw[fill=black] (7,0) circle (0.1cm);
\draw[thick,] (0,0) -- (7,0);
\draw (0,-0.5) node {$|M_1|^a$};
\draw (1,-0.5) node {$|M_2|^a$};
\draw (2,-0.5) node {$|M_3|^a$};
\draw (3,-0.5) node {$|M_4|^a$};
\draw (4,-0.5) node {$|M_5|^a$};
\draw (5,-0.5) node {$|M_6|^a$};
\draw (6,-0.5) node {$|M_7|^a$};
\draw (7,-0.5) node {$|M_8|^a$};
\draw (3.5,-1) node {$GM_2(3,8)$};
\end{tikzpicture}
 \caption{Movement of $n=8$ sensors for the robot with carrying $3-$capacity according to Algorithm $GM_j(3,8),$ when $j\in\{0,1,2\}.$}
  \label{fig:algos}
\end{figure}
Note that the minimal cost satisfies
 $$\min_{0\le j\le k-1}T^{(a,1)}_{(j,RS)}\le\frac{T^{(a,1)}_{(0,RS)}+T^{(a,1)}_{(1,RS)}+\dots+T^{(a,1)}_{(k-1,RS)}}{k}.$$ 
 Hence, applying Inequality (\ref{eq:sum22}) and Formula (\ref{eq:datask}) we have
 $$6^aX^a_n+\left(4\left\lfloor\frac{n}{k}\right\rfloor+16\right)^{a-1}\left(2|M_1|^a+2|M_n|^a+\frac{4T^{(a,1)}_{(AS)}}{k}\right).$$
This is enough to complete the proof of Theorem \ref{theorem:robot}. 
\end{proof}
We can also prove \textbf{the lower bound.}
\begin{Corollary}
 \label{thm:lower}
 Fix $1\le k\le n.$ Assume that $n$ sensors
 $X_1\le X_2\le \dots \le X_n$ are initially randomly placed on the $[0,\infty)$ according to general random process
 and the mobile robot with carrying $k$-capacity  is located at the origin $0.$
 Then, $T^{(a,1)}_{(RS)}$-transportation cost according to every robot assisted algorithm is at least
 $X^a_n.$
\end{Corollary}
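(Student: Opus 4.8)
The plan is to argue that \emph{any} robot-assisted algorithm, regardless of how it batches or orders the sensors, must at some point have its single robot physically located at a position no less than $X_n$, and that this alone forces the total distance raised to the power $a$ to be at least $X_n^a$. First I would recall the setup: the robot starts at the origin $0$, it is the \emph{only} carrier, and by the end of the algorithm every sensor $X_i$ has been delivered to its final position $X_i + M_i$. In particular the $n$-th sensor, which initially sits at $X_n = \max_i X_i \ge 0$, must be picked up by the robot and deposited at $X_n + M_n$. Since the robot is the one that moves sensor $X_n$, the robot's trajectory must visit the point $X_n$ (to collect that sensor at its initial location). Hence the robot, starting from $0$, travels to a point that is at distance at least $X_n$ from the origin along the line $[0,\infty)$.

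The key step is then purely a statement about a continuous trajectory on the half-line. Let $f:[0,\tau]\to[0,\infty)$ denote the robot's position as a function of time, with $f(0)=0$ and $f(t^\ast)=X_n$ for some $t^\ast$. The total distance travelled is the total variation of $f$, which is at least $|f(t^\ast)-f(0)| = X_n$. Therefore $T^{(1,1)}_{(RS)} \ge X_n$. To pass from the $a=1$ bound to the $a$-th power bound, I would invoke the definition of $T^{(a,1)}_{(RS)}$ as the total distance \textbf{to the power $a$}; since the robot's net displacement from the origin to the point $X_n$ contributes a single monotone stretch of length at least $X_n$, that stretch alone contributes at least $X_n^a$ to the sum of $a$-th powers of the segments of the trajectory, giving $T^{(a,1)}_{(RS)} \ge X_n^a$. (If instead $T^{(a,1)}_{(RS)}$ is interpreted as the single quantity $(\text{total distance})^a$, the bound is immediate from $T^{(1,1)}_{(RS)}\ge X_n$ by raising both sides to the power $a\ge 1$.)

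The main obstacle — really the only subtlety — is making precise the claim that the robot's trajectory must reach $X_n$. One must rule out the degenerate possibility that $M_n$ is very negative so that $X_n+M_n$ is small and the robot "need not" go far; but this is not possible, because the robot has to \emph{acquire} sensor $X_n$ at its starting location $X_n$ before it can move it anywhere, so the visit to $X_n$ is unavoidable. A second point to address is that there is a single robot of finite capacity $k$, so no sensor can be teleported; by Assumption~\ref{assumme:cappacity} every sensor is physically transported by the robot, and in particular the robot and sensor $X_n$ must be co-located at the moment of pickup. Once these two observations are in place, the total-variation argument closes the proof. I would also note that the bound is tight up to the leading behaviour captured in Theorem~\ref{theorem:robot}, which is why it is stated as a complement to that upper bound.
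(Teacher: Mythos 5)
Your argument is essentially the paper's own proof: the robot, starting at the origin, must travel to the rightmost sensor at $X_n$ in order to collect it, so its total travelled distance is at least $X_n$ and hence $T^{(a,1)}_{(RS)}\ge X_n^a$. One small caution: the paper (as used in the proof of Theorem \ref{theorem:robot}) treats $T^{(a,1)}_{(RS)}$ as the $a$-th power of the total distance, so your second reading is the operative one; the per-segment reading you also sketch would not yield $X_n^a$ for $a>1$ if the journey to $X_n$ were split into several pickup/drop-off segments, but this does not affect the claim under the paper's definition.
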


%\vspace{-0.5in}
\begin{proof}
The robot in the origin point $0$  has to collect all the sensors $X_1\le X_2\le\dots \le X_n.$ Hence, the robot has to move from the origin point $0$ to the random point $X_n.$
Therefore $T^{(a,1)}_{(RS)}$ is at least
%movement to the power  must be at least 
$\left(X_n-0\right)^a=X^a_n.$ 
\end{proof}
\section{Analysis of Robot Assisted Movement on the Plane}
\label{sec:plane}
In this section we look at the transportation cost for the robot assisted movement when the sensors are on the plane.

More precisely, we consider $n$ sensors placed in the quadrant $[0,\infty)\times[0,\infty)$
according to two independent general random processes $X_i$ and $Y_i.$ The random position of the sensor on the plane
is determined by the pair $(X_{i_1},Y_{i_2}),$ where $i_{1},i_{2}=1,2,\dots, \sqrt{n}$ and $n=m^2$ for some $m\in\mathbf{N}.$

We are now ready to define the transportation cost for robot assisted displacement and the transportation cost for the autonomous sensor displacement
on the plane.
\begin{Definition}[$T^{(a,2)}_{(RS)}$-transportation cost]
\label{def:t2autonomus}
Let $a\ge 1$ be a constant.
Let $T^{(a,2)}_{(RS)}$ be the total distance \textbf{to the power} $\mathbf{a}$ travelled by the robot of $k$-capacity from the origin position $(0,0)$
to the final location so as to move the sensor $(X_{i_1},Y_{i_2}),$ to the arbitrary final position 
$(X_{i_1}+M_{i_1},Y_{i_2}+N_{i_2}),$ provided that $i_{1},i_{2}=1,2,\dots, \sqrt{n}.$
\end{Definition}
\begin{Definition}[$T^{(a,2)}_{(AS)}$-transportation cost]\label{def-42}
Let $a\ge 1$ be a constant.
Assume that, for $i=1,2,\dots, n$ the sensor 
$(X_{i_1},Y_{i_2})$
moves autonomously to the arbitrary final position $(X_{i_1}+M_{i_1},Y_{i_2}+N_{i_2})$ for $i_{1},i_{2}=1,2,\dots, \sqrt{n}.$
The $T^{(a,2)}_{(AS)}$-transportation cost for the autonomous sensor displacement
is defined as the sum
$T^{(a,2)}_{(AS)}=\sum_{i_1, i_2=1}^{\sqrt{n}} \left(|M_{i_1}|^a+|N_{i_2}|^a\right).$
\end{Definition}
\subsection{Main Results on the Plane}
\label{sub:plane}
In this subsection we analyze and compare the transportation cost for the robot assisted movement with the autonomous sensor displacement when the sensors are on the plane. 
In order to move the sensors on the plane we present the sequence of greedy algorithms 
$GM_{j_1,j_2}(k,n),$ where $j_1,j_2\in\{0,1,\dots, k-1\},\,\,$ (see Algorithm \ref{robot_assisted_plane}).
Algorithm \ref{robot_assisted_plane} works in two phases. During the first phase (Steps $(1-5)$), we execute
Algorithm \ref{alg_anchor} according to the first coordinate. Thus, in the second phase (Steps $(7-11)$), 
Algorithm \ref{alg_anchor} is executed according to the second coordinate. Thus, the robot movements on the plane
is reduced to the robot movements on the line. Hence, Algorithm \ref{robot_assisted_plane} is correct. 
We  apply Theorem \ref{theorem:robot} on the line  from Subsection \ref{sub:line} 
and prove Theorem \ref{theorem:robot_plane} on the plane. 

Figure \ref{rand:pr} illustrates
the trajectory of the robot in Steps $(1-5)$
of Algorithm \ref{robot_assisted_plane} for $n=9$ and $k=3.$
The robot's trajectory is the line segment from $(0,0)$ to $(0,Y_1);$ 
$(0,Y_1)$ to $(\max(X_3,X_3+M_3),Y_1);$ 
$(\max(X_3,X_3+M_3),Y_1)$  to $(0,Y_1);$
$(0,Y_1),$ to $(0,Y_2);$
$(0,Y_2)$ to\\ $(\max(X_3,X_3+M_3),Y_2);$ 
$(\max(X_3,X_3+M_3),Y_2)$  to $(0,Y_2);$
$(0,Y_2),$ to $(0,Y_3);$
$(0,Y_3)$ to $(\max(X_3,X_3+M_3),Y_3)$ and
$(\max(X_3,X_3+M_3),Y_3)$  to $(0,Y_3).$ 
%\vspace{-5pt}
\begin{algorithm}
\caption{$GM_{j_1,j_2}(k,n)$ Greedy Movement,
$j_1,j_2\in\{0,1,\dots, k-1\},\,\,$ $k$-capacity of robot, $1\le k \le \sqrt{n},$ $n$-number of sensors, $n=m^2$ for some $m\in\mathbf{N}$. } %1\le k\le n,\,\, k\in N$.}
\label{robot_assisted_plane}
\begin{algorithmic}[1]
 \REQUIRE The initial random location $(X_{i_1},Y_{i_2})$ of the $n$ sensors in the $[0,\infty)\times[0,\infty),$ where $i_1,i_2=1,2,\dots \sqrt{n}$  according to two general random processes.
 The mobile robot with carrying $k$-capacity is located at the origin $(0,0).$
 \ENSURE  The final positions of the sensors at the location\\ $(X_{i_1}+M_{i_1},Y_{i_2}+N_{i_2}),$ where $i_1,i_2=1,2,\dots, \sqrt{n}.$
 %\STATE{sort the initial locations of sensors; the locations after sorting $x_1,x_2,\dots x_n,$  $x_1\le x_2\le\dots \le x_n.$}
 \FOR{$i_2=1$  \TO $\sqrt{n}$ } 
% \STATE{ 
 %\FOR{$i=1$  \TO $\sqrt{n}$ } 
 \STATE{The robot walks to the position $(0,Y_{i_2});$}
 \STATE{Execute algorithm $GM_{j_2}(k,\sqrt{n})$ according to the first\\ coordinate of the sensors $(X_1,Y_{i_2}),$ $(X_2,Y_{i_2}),\dots,$\\ $(X_{\sqrt{n}},Y_{i_{2}}).$
 It means to move to the positions $(X_1+M_1,Y_{i_2}),$ $(X_2+M_2,Y_{i_2}),\dots,$ $(X_{\sqrt{n}}+M_{\sqrt{n}},Y_{i_{2}});$}
 \STATE{The robot walks to the position $(0,Y_{i_2});$}
 \ENDFOR
  \STATE{The robot walks to the origin $(0,0);$}
   \FOR{$i_1=1$  \TO $\sqrt{n}$ } 
% \STATE{ 
 %\FOR{$i=1$  \TO $\sqrt{n}$ } 
 \STATE{The robot walks to the position $(X_{i_1}+M_{i_1}, 0);$}
 \STATE{Execute algorithm $GM_{j_1}(k,\sqrt{n})$ according to the second coordinate of the sensors $(X_{i_1}+M_{i_1},Y_1),$ $(X_{i_1}+M_{i_1},Y_2),\dots,$ $(X_{i_1}+M_{i_1},Y_{\sqrt{n}}).$ 
 It means to move to the positions\\  $(X_{i_1}+M_{i_1},Y_1+N_1),$ $(X_{i_1}+M_{i_1},Y_2+N_2),\dots,$\\ $(X_{i_1}+M_{i_1},Y_{\sqrt{n}}+N_{\sqrt{n}});$}
 \STATE{The robot walks to the position $(X_{i_1}+M_{i_1},0);$}
 %\STATE{The robot walks to the position $(0,Y_{i_2})$}
 \ENDFOR
\end{algorithmic}
\end{algorithm}
%\vskip -3pt

\begin{figure}[H]
\begin{center}
\begin{tikzpicture}
\draw[thick,dotted] (0,0) -- (4.3,0);
\draw[thick,dotted] (0,3.5) -- (0,3.8);
\draw (0,-0.5) node {$(0,0)$};
\draw (1,-0.5) node {$X_1$};
\draw (2,-0.5) node {$X_2$};
\draw (4,-0.5) node {$X_3$};
\draw (-0.5,0.5) node {$Y_1$};
\draw (-0.5,1.5) node {$Y_2$};
\draw (-0.5, 3.5) node {$Y_3$};

\filldraw[fill=black] (1,0.5) circle (0.06cm);
\filldraw[fill=black] (2,0.5) circle (0.06cm);
\filldraw[fill=black] (4,0.5) circle (0.06cm);
\draw [thick,<-] (3,0.5)--(3.5,0.5);
\draw [thick,->] (0,0.5)--(0.5,0.5);

\filldraw[fill=black] (1,1.5) circle (0.06cm);
\filldraw[fill=black] (2,1.5) circle (0.06cm);
\filldraw[fill=black] (4,1.5) circle (0.06cm);
\draw [thick,<-] (3,1.5)--(3.5,1.5);
\draw [thick,->] (0,1.5)--(0.5,1.5);

\filldraw[fill=black] (1,3.5) circle (0.06cm);
\filldraw[fill=black] (2,3.5) circle (0.06cm);
\filldraw[fill=black] (4,3.5) circle (0.06cm);
\draw [thick,<-] (3,3.5)--(3.5,3.5);
\draw [thick,->] (0,3.5)--(0.5,3.5);

\draw[thick] (0,0) -- (0,3.5);
\draw[thick] (0,0.5) -- (4,0.5);
\draw[thick] (0,1.5) -- (4,1.5);
\draw[thick] (0,3.5) -- (4,3.5);

\draw [thick,->] (0,0)--(0,0.25);
\draw [thick,->] (0,0.75)--(0,1);
\draw [thick,->] (0,2)--(0,2.5);

\end{tikzpicture}
\caption{The trajectory of robot in Steps $(1-5)$ of Algorithm \ref{robot_assisted_plane} for $n=9.$}
%\vskip -5pt
\label{rand:pr}
\end{center}
\end{figure}
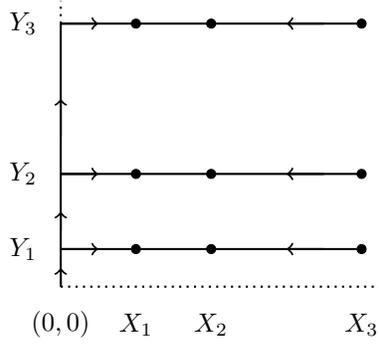
\vspace{-10pt}
\begin{theorem}
%\begin{theorem}
\label{theorem:robot_plane}
Fix $1\le k\le \sqrt{n}.$
Let $j_1, j_2\in\{0,1,\dots,k-1\}.$
%Let $T_{(AS)}=\sum_{i=1}^{n}|M_i|$ be the transportation cost for autonomous sensor displacement in Algorithm \ref{robot_assisted}. 
Let $T^{(a,2)}_{(j_1,j_2,RS)}$ be 
$T^{(a,2)}_{(RS)}$-transportation cost 
%$T_{(RS)}$-trans\-por\-tation cost 
in algorithm $GM_{j_1,j_2}(k,n).$ 
Then,
$$\min_{0\le j_1,j_2\le k-1}T^{(a,2)}_{(j_1,j_2,RS)}$$ is at most
\begin{align}
\nonumber&4^an^{a/2-1/2}\left(4\left\lfloor\frac{\sqrt{n}}{k}\right\rfloor+16\right)^{a-1}\frac{T^{(a,2)}_{(AS)}}{k}
+4^{a-1}n^{a/2}\left(4\left\lfloor\frac{\sqrt{n}}{k}\right\rfloor+16\right)^{a-1}\left(2|M_1|^a+2|M_{\sqrt{n}}|^a\right)\\
\nonumber&+4^{a-1}n^{a/2}\left(4\left\lfloor\frac{\sqrt{n}}{k}\right\rfloor+16\right)^{a-1}\left(2|N_1|^a+2|N_{\sqrt{n}}|^a\right)
\nonumber+\left(2^{3a-1}+6^a4^{a-1}n^{a/2}\right)Y^a_{\sqrt{n}}\\&+\left(4^{a-1}+6^a4^{a-1}n^{a/2}\right)X^a_{\sqrt{n}}.
\label{eq:chojnik}
\end{align}
%\text{\,\,\,\,when\,\,} a\in[1,\infty). 
%\end{theorem}
\end{theorem}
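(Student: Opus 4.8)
The plan is to reduce the planar problem to two applications of the one-dimensional Theorem~\ref{theorem:robot}, one for each phase of Algorithm~\ref{robot_assisted_plane}, and to push everything else into lower-order terms. Fix $j_1,j_2\in\{0,1,\dots,k-1\}$ and let $\mathcal{D}(j_1,j_2)$ denote the total distance the robot travels in $GM_{j_1,j_2}(k,n)$, so that $T^{(a,2)}_{(j_1,j_2,RS)}=\mathcal{D}(j_1,j_2)^a$. Phase~1 consists of $\sqrt n$ row-sweeps: for each $i_2$ the robot walks vertically to $(0,Y_{i_2})$, runs $GM_{j_2}(k,\sqrt n)$ on the common $x$-coordinates $X_1\le\dots\le X_{\sqrt n}$ with targets $X_i+M_i$, and walks horizontally back to $(0,Y_{i_2})$; afterwards it returns to the origin. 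Phase~2 is symmetric: $\sqrt n$ column-sweeps, each running $GM_{j_1}(k,\sqrt n)$ on the common $y$-coordinates $Y_1\le\dots\le Y_{\sqrt n}$ with targets $Y_i+N_i$ and walking back to the $x$-axis. The crucial structural fact is that the $x$-subproblem is identical in all rows and the $y$-subproblem identical in all columns, so each phase is exactly $\sqrt n$ repetitions of a single line-algorithm trajectory.

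First I would bound $\mathcal{D}(j_1,j_2)$ before taking $a$-th powers. Let $L_1(j_2)$ and $L_2(j_1)$ be the lengths of the $GM_{j_2}(k,\sqrt n)$ trajectory on the $X_i$ and of the $GM_{j_1}(k,\sqrt n)$ trajectory on the $Y_i$; note that $L_1(j_2)^a$ is exactly the line cost $T^{(a,1)}_{(j_2,RS)}$ for the subproblem on $X_1,\dots,X_{\sqrt n}$ in the sense of Theorem~\ref{theorem:robot}, and likewise for $L_2(j_1)^a$. Accounting for all segments of Algorithm~\ref{robot_assisted_plane}: the phase-1 horizontal displacement work is $\sqrt n\,L_1(j_2)$; the vertical moves linking consecutive rows plus the final return to the origin telescope to at most $2Y_{\sqrt n}$; the $\sqrt n$ ``walk back to $x=0$'' steps contribute at most $\sqrt n\,X_{\sqrt n}$, since the robot is never at $x$-coordinate larger than $X_{\sqrt n}$ when a row-sweep ends; symmetrically phase~2 contributes $\sqrt n\,L_2(j_1)$, at most $\sqrt n\,Y_{\sqrt n}$ from its ``walk back to $y=0$'' steps, and the horizontal shifts between consecutive columns total at most $X_{\sqrt n}+|M_{\sqrt n}|$. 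Hence, with absolute constants, $\mathcal{D}(j_1,j_2)\le \sqrt n\,L_1(j_2)+\sqrt n\,L_2(j_1)+\sqrt n\,X_{\sqrt n}+\sqrt n\,Y_{\sqrt n}+O(X_{\sqrt n}+Y_{\sqrt n}+|M_{\sqrt n}|)$.

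The right-hand side decouples: its $j_2$-dependence sits entirely in $L_1(j_2)$ and its $j_1$-dependence entirely in $L_2(j_1)$, so $\min_{j_1,j_2}\mathcal{D}(j_1,j_2)\le \sqrt n\min_{j_2}L_1(j_2)+\sqrt n\min_{j_1}L_2(j_1)+(\text{connecting-walk terms})$. Since $t\mapsto t^a$ is increasing, $\min_{j_1,j_2}T^{(a,2)}_{(j_1,j_2,RS)}=\big(\min_{j_1,j_2}\mathcal{D}(j_1,j_2)\big)^a$, and I would expand this $a$-th power of a sum of $O(1)$ summands via the power-mean inequality~(\ref{eq:mean}), which is where the constants $4^{a-1}$, $4^a$, $2^{3a-1}$ come from. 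Then I apply Theorem~\ref{theorem:robot} to the subproblem with $\sqrt n$ sensors $X_1\le\dots\le X_{\sqrt n}$ and displacements $M_1,\dots,M_{\sqrt n}$, giving $\min_{j_2}L_1(j_2)^a\le 6^aX_{\sqrt n}^a+\big(4\lfloor\sqrt n/k\rfloor+16\big)^{a-1}\big(2|M_1|^a+2|M_{\sqrt n}|^a+\frac{4}{k}\sum_{i=1}^{\sqrt n}|M_i|^a\big)$, and the symmetric bound for $\min_{j_1}L_2(j_1)^a$ in terms of $Y_{\sqrt n}$, $|N_1|$, $|N_{\sqrt n}|$ and $\sum_{i=1}^{\sqrt n}|N_i|^a$. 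Finally, Definition~\ref{def-42} gives $T^{(a,2)}_{(AS)}=\sqrt n\sum_{i=1}^{\sqrt n}|M_i|^a+\sqrt n\sum_{i=1}^{\sqrt n}|N_i|^a$, so $\frac{4}{k}\big(\sum_{i=1}^{\sqrt n}|M_i|^a+\sum_{i=1}^{\sqrt n}|N_i|^a\big)=\frac{4}{k\sqrt n}\,T^{(a,2)}_{(AS)}$; substituting this, multiplying the per-phase bounds by the factor $n^{a/2}$ coming from the $\sqrt n$ repetitions, and collecting the $X_{\sqrt n}^a$, $Y_{\sqrt n}^a$, $|M_\bullet|^a$, $|N_\bullet|^a$ and $T^{(a,2)}_{(AS)}$ terms reproduces~(\ref{eq:chojnik}).

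I expect the only real work to lie in the bookkeeping of the second paragraph: enumerating every segment of the trajectory in Algorithm~\ref{robot_assisted_plane} and checking that each non-displacement segment (inter-row vertical moves, the two returns to the origin and axes, inter-column horizontal moves, and the per-sweep walks back to an axis) is dominated by $X_{\sqrt n}$, $Y_{\sqrt n}$, $\sqrt n\,X_{\sqrt n}$, $\sqrt n\,Y_{\sqrt n}$ or $|M_{\sqrt n}|$, so that after the $a$-th power is taken these merge exactly into the constant$\,\cdot\,Y_{\sqrt n}^a$, $n^{a/2}\,\cdot\,X_{\sqrt n}^a$ and $n^{a/2}\,\cdot\,|M_1|^a$ (etc.) terms of~(\ref{eq:chojnik}); the conceptual content is just ``two passes of the line algorithm plus the triangle inequality.'' The one point worth an explicit remark is that the minimisation over $(j_1,j_2)$ factorises, precisely because the phase-1 cost depends only on $j_2$ and the phase-2 cost only on $j_1$.
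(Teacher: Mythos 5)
Your proposal follows essentially the same route as the paper's proof: decompose each phase of Algorithm \ref{robot_assisted_plane} into $\sqrt{n}$ repetitions of a single line trajectory plus connecting walks, use the fact that the phase-1 cost depends only on $j_2$ and the phase-2 cost only on $j_1$ so the minimisation factorises, apply the power-mean inequality (\ref{eq:mean}) and then Theorem \ref{theorem:robot} twice with $n:=\sqrt{n}$, and rewrite the displacement sums via Definition \ref{def-42} to obtain the $T^{(a,2)}_{(AS)}/(k\sqrt{n})$ term. The only difference is that you account explicitly for the walk-back legs ($\sqrt{n}\,X_{\sqrt{n}}$, $\sqrt{n}\,Y_{\sqrt{n}}$, $|M_{\sqrt{n}}|$), which the paper absorbs into its expressions $\sqrt{n}D_1(j_2,1)+Y_{\sqrt{n}}$ and $\sqrt{n}D_2(j_1,1)+X_{\sqrt{n}}$; this extra bookkeeping changes only the absolute constants in front of the $X^a_{\sqrt{n}}$ and $Y^a_{\sqrt{n}}$ terms, not the structure of the bound.
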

\begin{proof}
Let $j_1, j_2\in\{0,1,\dots,k-1\}$ and let $T^{(a,2)}_{(j_1,j_2,RS)}$ be
$T^{(a,2)}_{(RS)}$-trans\-por\-tation cost in algorithm
$GM_{j_1,j_2}(k,n).$ 

Let $D_{1}(j_2,i_2)$ be the distance travelled by the robot 
in Step $(3)$ of Algorithm
\ref{robot_assisted_plane}, provided that $i_2\in\{1,2,\dots,\sqrt{n}\}.$
%and let $T_{1}(i_2)=\min_{0\le j_2\le k-1}T_{1}(j_2,i_2).$

Let $D_{2}(j_1,i_1)$ be the distance travelled by the robot 
in Step $(9)$ of Algorithm
\ref{robot_assisted_plane}, provided that $i_1\in\{1,2,\dots,\sqrt{n}\}.$
%and let $T_{2}(i_1)=\min_{0\le j_1\le k-1}T_{1}(j_1,i_1).$

Hence, the distance travelled by the robot in Steps $(1-5)$ of Algorithm \ref{robot_assisted_plane} is equal to 
\begin{equation}
\label{eq:dist01}
\sqrt{n}D_1(j_2,1)+Y_{\sqrt{n}}
\end{equation}
and the distance travelled by the robot is Steps $(7-11)$ of Algorithm \ref{robot_assisted_plane} is equal to 
\begin{equation}
\label{eq:dist02}
\sqrt{n}D_2(j_1,1)+X_{\sqrt{n}}.
\end{equation}
It is easy to see that  the distance travelled by the robot in Step $(6)$ of Algorithm \ref{robot_assisted_plane} is equal to
\begin{equation}
 \label{eq:step6}
 Y_{\sqrt{n}}.
\end{equation}
Combining Expressions (\ref{eq:dist01}-\ref{eq:step6}) we have
$$
T^{(a,2)}_{(j_1,j_2,RS)}=\left(\sqrt{n}D_1(j_2,1)+\sqrt{n}D_2(j_1,1)+2Y_{\sqrt{n}}+X_{\sqrt{n}}\right)^a.
$$
Applying Inequality  (\ref{eq:mean}) we get
\begin{equation}
\label{eq:astra}
T^{(a,2)}_{(j_1,j_2,RS)}=4^{a-1}\left(n^{a/2}D^a_1(j_2,1)+n^{a/2}D^a_2(j_1,1)+2^aY^a_{\sqrt{n}}+X^a_{\sqrt{n}}\right).
\end{equation}
Applying Theorem \ref{theorem:robot} for $n:=\sqrt{n}$ and the sensors $X_1, X_2,\dots, X_{\sqrt{n}}$ we have
\begin{equation}
\min_{0\le j_2\le k-1}D^a_1(j_2,1)\le 6^aX^a_{\sqrt{n}}+
\label{eq:100ac}\left(4\left\lfloor\frac{\sqrt{n}}{k}\right\rfloor+16\right)^{a-1}\left(2|M_1|^a+2|M_{\sqrt{n}}|^a+\frac{4\sum_{i=1}^{\sqrt{n}}|M_i|^a}{k}\right).
\end{equation}
Using Theorem \ref{theorem:robot} for $n:=\sqrt{n}$ and the sensors $Y_1, Y_2,\dots, Y_{\sqrt{n}}$ we get
\begin{equation}
\min_{0\le j_1\le k-1}D^a_1(1,j_1)\le 6^aY^a_{\sqrt{n}}+
\label{eq:100ad}\left(4\left\lfloor\frac{\sqrt{n}}{k}\right\rfloor+16\right)^{a-1}\left(2|N_1|^a+2|N_{\sqrt{n}}|^a+\frac{4\sum_{i=1}^{\sqrt{n}}|N_i|^a}{k}\right).
\end{equation}
Finally, combining Estimations $(\ref{eq:astra}-\ref{eq:100ad})$ we have
the desired upper bound (\ref{eq:chojnik}). 
This completes the proof of Theorem \ref{theorem:robot_plane}.
\end{proof}
We can also prove \textbf{the lower bound.}
\begin{Corollary}
 \label{thm:lowerplane}
 %Fix $1\le k\le n.$ 
 Assume that $n$ sensors are placed on the plane $[0,\infty)\times[0,\infty)$
according to two identical and independent Poisson processes $X_i$ and $Y_i,$
for $i=1,2,\dots, \sqrt{n}$ each with arrival rate $\lambda>0$ 
 and the mobile robot with carrying $k$-capacity  is located at the origin $(0,0).$ 
 The random position of the sensor on the plane
is determined by the pair $(X_{i_1},Y_{i_2}),$ where $i_{1},i_{2}=1,2,\dots, \sqrt{n}$ and $n=m^2$ for some $m\in\mathbf{N}.$ 
 Then, $T^{(a,1)}_{(RS)}$-transportation cost according to every robot assisted algorithm is at least
 $X^a_n.$
\end{Corollary}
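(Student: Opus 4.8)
The plan is to follow the proof of Corollary~\ref{thm:lower} essentially verbatim, replacing the one-dimensional argument by a projection onto the first coordinate axis. A preliminary remark: the Poisson hypotheses are not actually used for this inequality. The bound is a pointwise geometric fact, valid for every fixed configuration of the $n$ sensors and every admissible robot trajectory, just as in Corollary~\ref{thm:lower}; the probabilistic assumptions only fix the context in which the right-hand side is later averaged.

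First I would observe that, exactly as in Corollary~\ref{thm:lower} and under the model of Assumption~\ref{assumme:cappacity}, the robot is the sole agent that moves sensors, so in the course of any robot-assisted algorithm it must physically reach and collect each of the $n$ sensors. In particular it must reach the sensor located at the grid point $(X_{\sqrt{n}},Y_{i_2})$, where $i_2$ is an index attaining $X_{\sqrt{n}}=\max_{1\le i\le\sqrt{n}}X_i$. Hence the robot's trajectory, which starts at the origin $(0,0)$, contains a point whose first coordinate equals $X_{\sqrt{n}}$.

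Then I would bound the trajectory length from below: the total distance travelled is at least the total variation of the first coordinate along the path, and the latter is at least $|X_{\sqrt{n}}-0|=X_{\sqrt{n}}$, since the path begins with first coordinate $0$ and attains first coordinate $X_{\sqrt{n}}$. By Definition~\ref{def:t2autonomus} the cost $T^{(a,2)}_{(RS)}$ is precisely this total distance raised to the power $a\ge 1$, whence $T^{(a,2)}_{(RS)}\ge X_{\sqrt{n}}^a$, which is the asserted lower bound. The same argument applied to the second coordinate, or to the sensor $(X_{\sqrt{n}},Y_{\sqrt{n}})$, yields the sharper estimates $T^{(a,2)}_{(RS)}\ge\max\{X_{\sqrt{n}}^a,Y_{\sqrt{n}}^a\}$ and even $T^{(a,2)}_{(RS)}\ge(X_{\sqrt{n}}^2+Y_{\sqrt{n}}^2)^{a/2}$, but the weaker form stated is all that is needed in the sequel. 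There is no genuine obstacle here; the only point worth making explicit is the ``must collect every sensor'' claim, which --- as in Corollary~\ref{thm:lower} --- is built into the robot model rather than something requiring proof.
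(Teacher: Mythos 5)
There is a substantive mismatch between what you prove and what the paper's own proof of Corollary~\ref{thm:lowerplane} establishes and later uses. Your projection argument is sound as far as it goes: the robot must reach the column $x=X_{\sqrt{n}}$, so every robot-assisted trajectory has length at least $X_{\sqrt{n}}$ and hence cost at least $X_{\sqrt{n}}^a$ (the ``$X_n^a$'' in the statement is an editing slip carried over from Corollary~\ref{thm:lower}; on the plane only $X_1,\dots,X_{\sqrt{n}}$ exist). But the paper's proof concludes with the much stronger bound $n^{a/2}\left(X_{\sqrt{n}}-X_1\right)^a$, i.e.\ it credits the robot with roughly one horizontal sweep of length $X_{\sqrt{n}}-X_1$ for \emph{each} of the $\sqrt{n}$ rows it must service, not merely with reaching the farthest column once. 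This is not a cosmetic difference: in the proof of Theorem~\ref{thm:inspiracja02}, Case~1, the corollary is combined with (\ref{eq:teta3}) and (\ref{eq:gamma_asymema}) applied to $X_{\sqrt{n}}-X_1$ (a sum of $\sqrt{n}-1$ exponentials) to obtain the lower bound $\frac{n^a}{\lambda^a}\left(1+O\left(\frac{1}{\sqrt{n}}\right)\right)$ in (\ref{eq:fin200xy}). Your bound only gives $\mathbf{E}\left[X_{\sqrt{n}}^a\right]=\Theta\left(n^{a/2}/\lambda^a\right)$, and even your sharper variants $\max\{X_{\sqrt{n}}^a,Y_{\sqrt{n}}^a\}$ or $\left(X_{\sqrt{n}}^2+Y_{\sqrt{n}}^2\right)^{a/2}$ stay at that order, so your closing claim that ``the weaker form stated is all that is needed in the sequel'' is exactly where the argument breaks down.

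So the missing idea is the $n^{a/2}$ factor: one must argue from the fact that the robot has to collect \emph{all} $n$ sensors of the $\sqrt{n}\times\sqrt{n}$ grid, which forces a traversal-type (spanning/visiting) lower bound on the trajectory length of order $\sqrt{n}$ times a one-dimensional spread such as $X_{\sqrt{n}}-X_1$, rather than a single ``reach the extreme column'' step. In fairness, the paper's own justification of this step is terse (it states that the robot must move from $(0,0)$ to $(X_{\sqrt{n}},Y_{\sqrt{n}})$ and then writes the bound), so a fully careful proof would spell out the grid-visiting argument; but the inequality you should be proving, and the one consumed downstream, is $T^{(a,2)}_{(RS)}\ge n^{a/2}\left(X_{\sqrt{n}}-X_1\right)^a$, not $X_{\sqrt{n}}^a$.
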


%\vspace{-0.5in}
\begin{proof}
The robot in the origin point $(0,0)$  has to collect all the sensors $(X_{i_1},Y_{i_2}),$ where $i_{1},i_{2}=1,2,\dots, \sqrt{n}$ and $n=m^2$ for some $m\in\mathbf{N}.$ 
Hence, the robot has to move from the origin point $(0,0)$ to the random point $(X_{\sqrt{n}},Y_{\sqrt{n}}).$
Therefore $T^{(a,1)}_{(RS)}$ is at least
%movement to the power  must be at least 
$\left(\sqrt{n}\left(X_{\sqrt{n}}-X_1\right)\right)^a=n^{a/2}\left(X_{\sqrt{n}}-X_1\right)^a.$ 
\end{proof}

%Robots starts from origin $(0,0)$ to the position $(0,Y_1).$
%Robots executes  freedy Algorithm \ref{robot_assisted} on the line connecting points $(0,Y_1)$ and $(X_3,Y_1),$ co
\section{Application to Coverage and Interference in Sensor Networks}
\label{sec:application}
In this section we apply the upper bounds  obtained from Theorem \ref{theorem:robot} and Theorem \ref{theorem:robot_plane} %for the robot assisted movement 
to the coverage simultaneously with the interference problem. Namely, we compare the known results for the energy consumption in autonomous sensor displacement to fulfil
the coverage and interference requirement obtained in \cite{ICDCNkapelko}, \cite{pervasiveKAPELKO}, \cite{ICDCN2020kapelko} 
with the current estimations in Sections \ref{sec:line} and \ref{sec:plane}.
As the result we obtain novel upper bounds for the energy consumption in robot assisted displacement to provide the coverage and interference requirement. 
\subsection{Coverage and Interference on the Line}
\label{sub:line_a}
In this subsection we look at the $(r_1,s)$-coverage and interference problem on the line. 
Let us recall the definition of the sensing radius.
\begin{Definition}[Sensing Radius]
\label{def:sensing}
We assume that a sensor placed at location $x$ on the line can sense any point at distance $r_1$ at most reither to the left or right of $x$
and call $r_1$ the sensing radius of the sensor.
\end{Definition}
In particular, we consider $n$ random sensors $X_1\le X_2 \le \dots \le X_n$  with identical sensing radius $r_1$
initially randomly displaced on the $[0,\infty)$ according to Poisson process with arrival rate $\lambda>0.$ 
The formal definition of $(r_1,s)$-coverage and interference problem on the line is as follows.
\begin{Definition}[$(r_1,s)$-coverage and interference]
\label{def:interline}
We move the sensors from the initial random locations $X_1\le X_2 \le \dots \le X_n$ to the final positions $X_1+M_1\le$ $X_2+M_2\le \dots \le X_n+M_n$ so as to: 
\begin{itemize}
\item Ensure coverage in the sense that every point in the interval $[0,X_n+M_n]$ is in the sensing radius of at least one sensor.  
\item Avoid interference, i.e., each pair of sensors is at interference distance greater or equal to $s.$
%\item Provide that each pair of sensors is at interference distance greater or equal to $s.$ 
\end{itemize}
\end{Definition}
In the recent paper \cite{ICDCNkapelko}, the maximum of the expected sensor's displacements to the power $a\ge 1$ metric given by
\begin{equation}
\label{eq:icdcna01}
\mathbf{E_{\max}}\left[T^{(a,1)}_{(AS)}\right]=\max_{1\le i\le n}\mathbf{E}\left[|M_i|^a\right]
\end{equation}
was investigated for $(r_1,s)$-coverage and interference problem when the sensors autonomously move on the line.
%Further,  \cite{pervasiveKAPELKO} considers the maximum of the expected sensor's displacements to the power
%for the autonomous sensor displacement.

Further, the author in \cite{ICDCN2020kapelko} considered the expected $a-$total movement as follows
\begin{equation}
\label{eq:icdcna}
\mathbf{E}\left[T^{(a,1)}_{(AS)}\right]=\sum_{i=1}^{n}\mathbf{E}\left[|M_i|^a\right],\,\,\,\text{provided that}\,\,\, a\ge 1.
\end{equation}%and 
%the maximum of the expected sensor's displacements  
%$$\max_{1\le i\le n}\mathbf{E}\left[|M_i|\right]$$
for interference-connectivity problem for the autonomous sensor displacement on the line and in the higher dimension.

It is worth pointing out that the main results of \cite{ICDCN2020kapelko} for interference-connectivity problem are easily applicable to
$(r_1,s)$-coverage and interference problem when the sensors move autonomously on the line. %This investigation was done
%for large number $n$ of sensors (see the Landau asymptotic notation in subsection \ref{sub:preset}).

Table \ref{tab:dwasak} for $d=1$ summarizes the known results on the line.
\begin{table}[H]
%\label{tab:dwasak}
\caption {The minimal maximum of expected sensor's displacements and the expected minimal
$T^{(a,d)}_{(AS)}$-transportation cost for  $(r_d,s)$-coverage and interference problem 
of $n$ sensors in the $[0,\infty)^d$ for $d\in\{1,2\}$ provided that $\epsilon\ge \delta>0$ are arbitrarily small constants independent on $\lambda$ and $n$ (see \cite{ICDCNkapelko}, \cite{pervasiveKAPELKO}, \cite{ICDCN2020kapelko}).
} 
%\label{tab:hugherasek} 
\label{tab:dwasak}
\begin{center}
%\label{tab:dwasar}
 %\caption{Nine mobile sensors located in the interrior of a cube move to new positions according to steps $(1-6)$ of Algorithm 1.}
 \begin{tabular}{|c|c|c|c|c|} 
%   \caption{Nine mobile sensors located in the interrior of a cube move to new positions according to steps $(1-6)$ of Algorithm 1.}
 %\end{tabular}
 \hline
 \begin{tabular}{c}Sensing radius $r_1,$\\ Square sensing radius $r_2$ \end{tabular} & Interference distance $s$  & 
 $\mathbf{E_{\max}}\left[T^{(a,d)}_{(AS)}\right]$ & $\mathbf{E}\left[T^{(a,d)}_{(AS)}\right]$ \\
  \hline
 $r_d=\frac{1+\epsilon}{2\lambda}$  & $s=\frac{1-\delta}{\lambda}$  & $\Theta(1)/\lambda^a$ & $O(n)/\lambda^a$ \\ 
 \hline
 $r_d=\frac{1}{2\lambda},$ & $s=\frac{1}{\lambda}$  & $\Theta\left(n^{\frac{a}{2d}}\right)/\lambda^a$ & $\Theta\left(n^{1+\frac{a}{2d}}\right)/\lambda^a$  \\  
 \hline
 $r=\frac{1+\epsilon}{2\lambda}$ & $s=\frac{1+\delta}{\lambda}$ & $\Theta\left(n^{\frac{a}{d}}\right)/\lambda^a$ & 
$\Theta\left(n^{1+\frac{a}{d}}\right)/\lambda^a$\\ 
 \hline
 \end{tabular}
\end{center}
\end{table}
In order to compare the results about the \textit{autonomous sensor displacement} in Table \ref{tab:dwasak} for $d=1$ with the \textit{robot assisted displacement} in
Section \ref{sec:line},
%$T_{(RS)}$-transportation cost for the robot assisted with $T_{(AS)}$-transportation cost for the autonomous sensor  displacement
%the results from the previous papers \cite{ICDCNkapelko}, \cite{pervasiveKAPELKO}, \cite{ICDCN2020kapelko}
we have to reformulate Equation (\ref{aligna1}) in Theorem \ref{theorem:robot} and Corollary \ref{thm:lower} (see Subsection  \ref{sub:line})
for the expected transportation cost.  This is the subject of Theorem \ref{thm:inspiracja01}.
\begin{theorem}
\label{thm:inspiracja01}
Assume that the sensors $X_1\le X_2\le \dots \le X_n$ are initially randomly placed on the $[0,\infty)$ according to Poisson process with arrival rate $\lambda>0.$
Fix $1\le k\le n.$ 
Let $j\in\{0,1,\dots,k-1\}.$
%Let $T_{(AS)}=\sum_{i=1}^{n}|M_i|$ be the transportation cost for autonomous sensor displacement in Algorithm \ref{robot_assisted}
Let $T^{(a,1)}_{(j,RS)}$ be $T^{(a,1)}_{(RS)}$-trans\-por\-tation cost in algorithm $GM_j(k,n).$ 
Then% $$\min_{0\le j\le k-1}T_{(j,RS)}$$ is at most
%\begin{align*}
\begin{equation}
%\label{eq:fin100} 
%\mathbf{E}\left[\min_{0\le j\le k-1}T_{(j,RS)}\right] & \ge \frac{n}{\lambda},\\ 
\label{eq:fin200} 
\frac{n^a}{\lambda^a}\left(1+O\left(\frac{1}{n}\right)\right)\le \mathbf{E}\left[\min_{0\le j\le k-1}T^{(a,1)}_{(j,RS)}\right], 
%\end{align*}
\end{equation}
\begin{equation}
%\label{eq:fin100} 
%\mathbf{E}\left[\min_{0\le j\le k-1}T_{(j,RS)}\right] & \ge \frac{n}{\lambda},\\ 
%\label{eq:fin200b} 
\mathbf{E}\left[\min_{0\le j\le k-1}T^{(a,1)}_{(j,RS)}\right] \le \frac{6^an^a}{\lambda^a}\left(1+O\left(\frac{1}{n}\right)\right)
+\left(4\left\lfloor\frac{n}{k}\right\rfloor+16\right)^{a-1}\Bigg( 4\mathbf{E_{\max}}\left[T^{(a,1)}_{(AS)}\right]
\label{eq:fin200b} +\frac{4\mathbf{E}\left[T^{(a,1)}_{(AS)}\right]}{k}\Bigg). 
%\end{align*}
\end{equation}
\end{theorem}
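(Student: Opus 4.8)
The plan is to take expectations directly in the pointwise estimate of Theorem~\ref{theorem:robot} and in the pointwise lower bound of Corollary~\ref{thm:lower}. Both hold for \emph{every} realization of the random positions $X_1\le X_2\le\dots\le X_n$ and of the induced displacements $M_1,\dots,M_n$ (the latter being deterministic functions of the former once the scheduling requirement is fixed), so expectation can be applied by monotonicity and linearity with no independence assumption. The only distributional fact needed is that, under a Poisson process of rate $\lambda$ on $[0,\infty)$, the last point $X_n$ is the $n$-th arrival time, hence Gamma-distributed with shape $n$ and rate $\lambda$, with density $\lambda^n x^{n-1}e^{-\lambda x}/(n-1)!$.

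For the lower bound (\ref{eq:fin200}): by Corollary~\ref{thm:lower} every robot-assisted algorithm, in particular the minimizing one, has transportation cost at least $X_n^a$, so
\[
\mathbf{E}\left[\min_{0\le j\le k-1}T^{(a,1)}_{(j,RS)}\right]\ \ge\ \mathbf{E}\!\left[X_n^a\right]
=\frac{1}{(n-1)!}\int_0^\infty x^{a}\,\lambda^n x^{n-1}e^{-\lambda x}\,dx
=\frac{1}{\lambda^{a}}\cdot\frac{\Gamma(n+a)}{\Gamma(n)},
\]
where the integral is evaluated via the substitution $u=\lambda x$. I would then expand the Gamma ratio, $\Gamma(n+a)/\Gamma(n)=n^{a}\bigl(1+\tfrac{a(a-1)}{2n}+O(n^{-2})\bigr)$ (Stirling, or a telescoping-plus-mean-value argument), to get $\mathbf{E}[X_n^a]=\tfrac{n^a}{\lambda^a}\bigl(1+O(1/n)\bigr)$, which is (\ref{eq:fin200}). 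Since $\lambda$ scales out cleanly ($X_n$ at rate $\lambda$ equals $X_n$ at rate $1$ divided by $\lambda$), the error term is uniform in $\lambda$.

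For the upper bound (\ref{eq:fin200b}): taking expectations in Theorem~\ref{theorem:robot} and using linearity,
\begin{align*}
\mathbf{E}\left[\min_{0\le j\le k-1}T^{(a,1)}_{(j,RS)}\right]
&\le 6^a\,\mathbf{E}[X_n^a]\\
&\quad+\left(4\left\lfloor\tfrac{n}{k}\right\rfloor+16\right)^{a-1}
\left(2\,\mathbf{E}\!\left[|M_1|^a\right]+2\,\mathbf{E}\!\left[|M_n|^a\right]+\frac{4}{k}\sum_{i=1}^n\mathbf{E}\!\left[|M_i|^a\right]\right).
\end{align*}
Here $6^a\mathbf{E}[X_n^a]=\tfrac{6^an^a}{\lambda^a}(1+O(1/n))$ by the computation above; $2\mathbf{E}[|M_1|^a]+2\mathbf{E}[|M_n|^a]\le 4\max_{1\le i\le n}\mathbf{E}[|M_i|^a]=4\,\mathbf{E_{\max}}\!\left[T^{(a,1)}_{(AS)}\right]$ straight from (\ref{eq:icdcna01}); and $\sum_{i=1}^n\mathbf{E}[|M_i|^a]=\mathbf{E}\!\left[T^{(a,1)}_{(AS)}\right]$ is exactly (\ref{eq:icdcna}). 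Substituting these three facts yields (\ref{eq:fin200b}).

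The only genuinely non-routine step is the asymptotics of $\mathbf{E}[X_n^a]=\lambda^{-a}\Gamma(n+a)/\Gamma(n)$ for non-integer $a\ge1$, together with the check that the resulting $O(1/n)$ is independent of $\lambda$; both are handled by the scaling remark and the standard Gamma-ratio expansion above. I would also point out explicitly, for clarity, that the correlation between the $M_i$ and $X_n$ is irrelevant here, since we only ever take the expectation of a single deterministic inequality between random variables.
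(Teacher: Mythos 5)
Your proposal is correct and follows essentially the same route as the paper: pass to expectations in the pointwise bounds of Theorem~\ref{theorem:robot} and Corollary~\ref{thm:lower}, use the Gamma-distribution moment $\mathbf{E}[X_n^a]=\lambda^{-a}\Gamma(n+a)/\Gamma(n)$ with the standard expansion $\Gamma(n+a)/\Gamma(n)=n^a\left(1+O(1/n)\right)$, and absorb the $M_1,M_n$ terms into $\mathbf{E_{\max}}\left[T^{(a,1)}_{(AS)}\right]$ and the sum into $\mathbf{E}\left[T^{(a,1)}_{(AS)}\right]$. Your direct integral evaluation of $\mathbf{E}[X_n^a]$ and the explicit remark that no independence between the $M_i$ and $X_n$ is needed are only presentational refinements of the paper's argument.
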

\begin{proof}
There are two cases to consider

\textit{Case 1: Inequality (\ref{eq:fin200})}

We know that the random variable $X_i$, i.e., the random position of $i$-th sensor
is the sum of $i$ independent and identically distributed exponential random variables with parameter $\lambda$ and
obeys Gamma distribution with parameters $i\in\mathbb{N}\setminus\{0\},\lambda>0$ 
(see \cite{kingman,ross2002}). Notice that,
\begin{equation}
\label{eq:teta3}
 \mathbf{E}[|X_k|^a]=\frac{1}{\lambda^k}\frac{\Gamma(k+a)}{\Gamma(k)}
 ,\,\,\,\text{when}\,\,\,k\in\{1,2,\dots,n\}\,\,\,\text{and}\,\,\,a>0
\end{equation}
(see \cite[Chapter 15, 22]{stat_disc}).

The following asymptotic expansion for Gamma  function is well known (see \cite[Identity 5.11.13]{NIST} for $z=n$ and $b=0$)
\begin{equation}
\label{eq:gamma_asymema}
\frac{\Gamma(n+a)}{\Gamma(n)}=n^a\left(1+O\left(\frac{1}{n}\right)\right).
\end{equation}

Passing to the expectation in Corollary \ref{thm:lower} and applying Formulas (\ref{eq:teta3}), (\ref{eq:gamma_asymema}) we have the lower bound.

\textit{Case 2: Inequality (\ref{eq:fin200b})}

Passing to the expectation in Equation (\ref{aligna1}) (see Theorem \ref{theorem:robot}), applying Formulas (\ref{eq:icdcna01}), (\ref{eq:icdcna}), (\ref{eq:teta3}), (\ref{eq:gamma_asymema}) as well as the estimation 
$$\mathbf{E}\left[|M_1|^a\right]+\mathbf{E}\left[|M_n|^a\right] \le2\mathbf{E_{\max}}\left[T^{(a,1)}_{(AS)}\right]$$
%$\mathbf{E}\left[|M_n|\right]\le\max_{1\le k\le n}\mathbf{E}\left[|M_k|^a\right],$
we get the upper bound. 

It completes the proof of Theorem \ref{thm:inspiracja01}.
\end{proof}
We now compare the new result of Theorem \ref{thm:inspiracja01} for the robot assisted movement ($\min_{0\le j\le k-1}T_{(j,RS)}$ transportation cost) with the known results for the autonomous sensors displacement
($T^{(a,1)}_{(AS)}$ transportation cost) in Table \ref{tab:dwasak} for $d=1.$ Specifically, we find 
$$\min\left(\mathbf{E}\left[\min_{0\le j\le k-1}T^{(a,1)}_{(j,RS)}\right], \mathbf{E}\left[T^{(a,1)}_{(AS)}\right]  \right)$$ 
as the function of the robot capacity $k,$ the sensing radius $r_1$ and the interference distance $s.$

Table \ref{tab:dwas} displays the described comparison of the energy consumption in \textit{the robot assisted movement} and in \textit{the autonomous sensor dis\-place\-ment}.
%displays the described comparison of analyzed robot assisted movement and autonomous sensor displacement.
\begin{table*}[ht]
%\label{tab:dwa}
\caption {Comparison of the expected transportation cost for the
robot assisted movement  with $k$-capacity  and the autonomous sensor displacement to ensure $(r_1,s)$-coverage and interference problem
for $n$ sensors in the $[0,\infty),$ provided that $\epsilon\ge \delta>0$ are arbitrary small constants independent on $\lambda$ and $n.$ } \label{tab:hugher1} 
\begin{center}
\label{tab:dwas}
 %\caption{Nine mobile sensors located in the interrior of a cube move to new positions according to steps $(1-6)$ of Algorithm 1.}
 \begin{tabular}{|c|c|c|c|} 
%   \caption{Nine mobile sensors located in the interrior of a cube move to new positions according to steps $(1-6)$ of Algorithm 1.}
 %\end{tabular}
 \hline
 \begin{tabular}{c}Sensing \\ radius $r_1,$\end{tabular} & \begin{tabular}{c} Interference \\ distance $s$\end{tabular}  & 
Movement & \begin{tabular}{c}Expected  transportation cost\\(Energy consumption)\end{tabular} \\
\hline
 $r_1=\frac{1+\epsilon}{2\lambda}$ & $s=\frac{1-\delta}{\lambda}$   &  $T^{(a,1)}_{(AS)}$\,\,\, if\,\,\, $a\ge 1$ & $O\left(n\right)/\lambda^a$ \\ 
   \hline
 $r_2=\frac{1}{2\lambda}$ & $s=\frac{1}{\lambda}$   & $T^{(a,1)}_{(AS)}$\,\,\, if\,\,\, $a> 2$  &  $O\left(n^{1+\frac{a}{2}}\right)/\lambda^a$\\
 \hline
    $r_1=\frac{1}{2\lambda}$ & $s=\frac{1}{\lambda}$   & $T^{(a,1)}_{(RS)}$,\,\, if\,\,\, $a\in[1,2]$  &  \begin{tabular}{c}$O\left(n^{\frac{3a}{2}}\right)/(k\lambda)^a\,\,\,$ if $\,\,\,k\in\left[n^{1-\frac{1}{a}}, \sqrt{n}\right]$
 \\ $\,\,\,\,\,\, \Theta\left(n^{a}\right)/\lambda^a\,\,\,$ if $\,\,\,k\in\left[\sqrt{n}, n\right]$\end{tabular}\\
   \hline
 $r_1=\frac{1+\epsilon}{2\lambda}$   & $s=\frac{1+\delta}{\lambda}$       &  $T^{(a,1)}_{(RS)}$\,\,\, if\,\,\, $a\ge 1$ & 
$O\left(n^{2a}\right)/(k\lambda)^a\,\,\,$ if $k\in\left[n^{1-1/a}, n\right]$  \\ 
 \hline
 \end{tabular}
\end{center}
\end{table*}
%\clearpage
%It has few new ideas that go beyond the transaction paper.
\subsection{Coverage and Interference on the Plane}
\label{sub:plane_a}
In this subsection we analyze the $(r_2,s)$-coverage and interference problem on the plane. 

We consider $n$ sensors that are randomly placed on the $[0,\infty)\times[0,\infty)$
according to two identical and independent Poisson processes $X_i$ and $Y_i,$
for $i=1,2,\dots, \sqrt{n}$ each with arrival rate $\lambda>0,$ and $n=m^2$ for some $m\in\mathbf{N}.$
The random position of the sensor with identical square sensing radius $r_2$ 
on the plane is determined by the pair $(X_{i_1},Y_{i_2}),$ where $i_{1},i_{2}=1,2,\dots, \sqrt{n}.$ 

Let us recall the concept of \textbf{the square sensing radius.}% from the paper \cite{adhocnow2015_KK}.
\begin{Definition}[cf. \cite{adhocnow2015_KK}, Square Sensing Radius]
\label{def:square}
We assume that a sensor located in position $(x,y)$ can cover any point in the area delimited by the square with $4$ vertices
$(x\pm r_2,y\pm r_2)$ and call $r_2$ the square sensing radius of the sensor.
%Our investigation can be easily transferred to this model by %describing %a minimum 
%circle circumscribingthe square}
\end{Definition}
However, if the generally accepted coverage area of a sensor is a circular disk of radius $r_c$ the upper bound results proved in the sequel 
for the square sensing radius $r_2$ hold for circular disk of radius $r_c=\sqrt{2}r_2.$ 
The formal definition of $(r_2,s)$-coverage and interference problem on the plane is as follows.
\begin{Definition}[$(r_2,s)$-coverage and interference]
\label{def:squaresquare}
We move the sensors from initial positions $(X_{i_1},Y_{i_2})$ to the final locations $(X_{i_1}+M_{i_1},Y_{i_2}+N_{i_2}),$ for $i_{1},i_{2}=1,2,\dots, \sqrt{n}$ 
such that:
\begin{itemize}
\item Move the sensors only along the axes provided that 
\begin{align*}
X_1+M_1&\le X_2+M_2\le \dots \le X_{\sqrt{n}}+M_{\sqrt{n}},\\ 
Y_1+N_1&\le  Y_2+N_2\le \dots \le Y_{\sqrt{n}}+N_{\sqrt{n}},
\end{align*}
and the final position of each sensor is in the same row and column as its initial position.
\item Ensure coverage in the sense that every point in the area delimited by the rectangle with $4$ vertices
$(0,0),$ $(X_{\sqrt{n}}+M_{\sqrt{n}},0),$ $(X_{\sqrt{n}}+M_{\sqrt{n}}, Y_{\sqrt{n}}+N_{\sqrt{n}})$,
and $(0,Y_{\sqrt{n}}+N_{\sqrt{n}})$ is in the square sensing radius of at least one sensor.
\item Avoid interference, i.e., each pair of sensors is at interference distance greater or equal to $s$.
%\item Provide that each pair of sensors is at interference distance greater or equal to $s.$ 
\end{itemize}
\end{Definition}
Observe that the main results for the maximum of the expected sensor's displacement metric and $(r_1,s)$-coverage and interference problem on the line
(see \cite{ICDCNkapelko})
are easily applicable to $(r_2,s)$-coverage and interference problem on the plane. For $d=2$, Table \ref{tab:dwasak} displays the results on the plane for
\begin{equation}
\label{eq:icdcna01asek}
\max\left(\mathbf{E}\left[T^{(a,2)}_{(AS)}\right]\right)=\max_{1\le i\le \sqrt{n}}\mathbf{E}\left[|M_i|^a\right]+\max_{1\le i\le \sqrt{n}}\mathbf{E}\left[|N_i|^a\right].
\end{equation}
Moreover, in \cite{ICDCN2020kapelko} the expected transportation cost
\begin{equation}
\label{eq:icdcnabc}
\mathbf{E}\left[T^{(a,2)}_{(AS)}\right]=\sqrt{n}\sum_{i=1}^{\sqrt{n}}\left(\mathbf{E}\left[|M_i|^a\right]+\mathbf{E}\left[|N_i|^a\right]\right)
\end{equation}%and 
%the maximum of the expected sensor's displacements  
%$$\max_{1\le i\le n}\mathbf{E}\left[|M_i|\right]$$
was investigated for interference-connectivity problem for the autonomous sensor displacement on the plane.
The main results of \cite{ICDCN2020kapelko} for interference-connectivity problem are applicable to
$(r_2,s)$-coverage and interference problem when the sensors move auto\-no\-mous\-ly on the plane. %This investigation was done
%for large number $n$ of sensors (see the Landau asymptotic notation in subsection \ref{sub:preset}).
(see  Table \ref{tab:dwasak} for $d=2$).

In order to compare the results about the \textit{autonomous sensor displacement}  in Table \ref{tab:dwasak} for $d=2$ with the \textit{robot assisted displacement} in
Section \ref{sec:plane},
%$T_{(RS)}$-transportation cost for the robot assisted with $T_{(AS)}$-transportation cost for the autonomous sensor  displacement
%the results from the previous papers \cite{ICDCNkapelko}, \cite{pervasiveKAPELKO}, \cite{ICDCN2020kapelko}
we have to reformulate Theorem \ref{theorem:robot_plane} in Section  \ref{sec:plane}
for the expected transportation cost.  This is the subject of Theorem \ref{thm:inspiracja02}.
\begin{theorem}
\label{thm:inspiracja02}
Assume that $n$ sensors are placed on the plane $[0,\infty)\times[0,\infty)$
according to two identical and independent Poisson processes $X_i$ and $Y_i,$
for $i=1,2,\dots, \sqrt{n}$ each with arrival rate $\lambda>0.$ % and $n=m^2$ for some $m\in\mathbf{N}.$ 
The random position of the sensor on the plane
is determined by the pair $(X_{i_1},Y_{i_2}),$ where $i_{1},i_{2}=1,2,\dots, \sqrt{n}$ and $n=m^2$ for some $m\in\mathbf{N}.$ 
Fix $1\le k\le \sqrt{n}.$
Let $j_1,j_2\in\{0,1,\dots,k-1\}.$
%Let $T_{(AS)}=\sum_{i=1}^{n}|M_i|$ be the transportation cost for autonomous sensor displacement in Algorithm \ref{robot_assisted}
Let $T^{(a,2)}_{(j_1,j_2,RS)}$ be $T^{(a,2)}_{(RS)}$-trans\-por\-tation cost 
%be the distance travelled by the robot
%$T_{(RS)}$-trans\-por\-tation cost 
in algorithm $GM_{j_1,j_2}(k,n).$ 
Then
\begin{equation}
%\label{eq:fin100} 
%\mathbf{E}\left[\min_{0\le j\le k-1}T_{(j,RS)}\right] & \ge \frac{n}{\lambda},\\ 
\label{eq:fin200xy} 
\frac{n^a}{\lambda^a}\left(1+O\left(\frac{1}{\sqrt{n}}\right)\right)\le \mathbf{E}\left[\min_{0\le j_1,j_2\le k-1}T^{(a,2)}_{(j_1,j_2,RS)}\right], 
%\end{align*}
\end{equation}
\begin{align}
%\label{eq:fin100} 
%\mathbf{E}\left[\min_{0\le j\le k-1}T_{(j,RS)}\right] & \ge \frac{n}{\lambda},\\ 
%\label{eq:fin200b} 
\nonumber \mathbf{E}\left[\min_{0\le j_1,j_2\le k-1}T^{(a,2)}_{(j_1,j_2,RS)}\right] & \le \frac{(2^{3a-1}3^{a})n^a}{\lambda^a}\left(1+O\left(\frac{1}{\sqrt{n}}\right)\right)\\
\label{eq:fin200xyz}&+\left(4\left\lfloor\frac{\sqrt{n}}{k}\right\rfloor+16\right)^{a-1}4^{a-1}n^{a/2}\Bigg( 4\mathbf{E_{\max}}\left[T^{(a,2)}_{(AS)}\right]+\frac{4\mathbf{E}\left[T^{(a,2)}_{(AS)}\right]}{k \sqrt{n}}\Bigg). 
%\end{align*}
\end{align}
\end{theorem}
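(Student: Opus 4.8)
The plan is to derive both inequalities by passing to the expectation in results that are already established for an arbitrary (deterministic) robot trajectory, and then to invoke the Gamma moment identity (\ref{eq:teta3}) together with the asymptotic expansion (\ref{eq:gamma_asymema}), applied with $n$ replaced by $\sqrt{n}$. This is the exact plane analogue of the argument used for Theorem \ref{thm:inspiracja01}.

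For the lower bound (\ref{eq:fin200xy}) I would start from Corollary \ref{thm:lowerplane}; its proof in fact shows that $\min_{0\le j_1,j_2\le k-1}T^{(a,2)}_{(j_1,j_2,RS)}\ge n^{a/2}(X_{\sqrt{n}}-X_1)^a$ for every robot assisted algorithm. Since $X_{\sqrt{n}}-X_1$ is a sum of $\sqrt{n}-1$ i.i.d. exponential random variables with parameter $\lambda$, it obeys a Gamma law with shape $\sqrt{n}-1$; hence by (\ref{eq:teta3}) and (\ref{eq:gamma_asymema}) one gets $\mathbf{E}\left[(X_{\sqrt{n}}-X_1)^a\right]=\frac{n^{a/2}}{\lambda^a}\left(1+O\left(\frac{1}{\sqrt{n}}\right)\right)$. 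Multiplying by $n^{a/2}$ and taking expectations yields (\ref{eq:fin200xy}).

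For the upper bound (\ref{eq:fin200xyz}) I would take the expectation of the right-hand side of (\ref{eq:chojnik}) in Theorem \ref{theorem:robot_plane}, which already bounds $\min_{0\le j_1,j_2\le k-1}T^{(a,2)}_{(j_1,j_2,RS)}$. Three ingredients are needed. First, for the boundary displacements, $\mathbf{E}[|M_1|^a]+\mathbf{E}[|M_{\sqrt{n}}|^a]\le 2\max_{1\le i\le\sqrt{n}}\mathbf{E}[|M_i|^a]$ and likewise for the $N_i$, so by definition (\ref{eq:icdcna01asek}) we obtain $\mathbf{E}[|M_1|^a]+\mathbf{E}[|M_{\sqrt{n}}|^a]+\mathbf{E}[|N_1|^a]+\mathbf{E}[|N_{\sqrt{n}}|^a]\le 2\,\mathbf{E_{\max}}\left[T^{(a,2)}_{(AS)}\right]$. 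Second, for the random endpoints, $X_{\sqrt{n}}$ and $Y_{\sqrt{n}}$ are Gamma with shape $\sqrt{n}$, so again by (\ref{eq:teta3}) and (\ref{eq:gamma_asymema}), $\mathbf{E}[X_{\sqrt{n}}^a]=\mathbf{E}[Y_{\sqrt{n}}^a]=\frac{n^{a/2}}{\lambda^a}\left(1+O\left(\frac{1}{\sqrt{n}}\right)\right)$. Third, the cost term $\mathbf{E}[T^{(a,2)}_{(AS)}]$ (given by (\ref{eq:icdcnabc})) is kept intact. Substituting, the two leading contributions $6^a4^{a-1}n^{a/2}\mathbf{E}[X_{\sqrt{n}}^a]$ and $6^a4^{a-1}n^{a/2}\mathbf{E}[Y_{\sqrt{n}}^a]$ add up to $2\cdot 6^a4^{a-1}\frac{n^a}{\lambda^a}\left(1+O\left(\frac{1}{\sqrt{n}}\right)\right)=\frac{2^{3a-1}3^a n^a}{\lambda^a}\left(1+O\left(\frac{1}{\sqrt{n}}\right)\right)$; the remaining endpoint terms $2^{3a-1}\mathbf{E}[Y_{\sqrt{n}}^a]+4^{a-1}\mathbf{E}[X_{\sqrt{n}}^a]=O\left(n^{a/2}/\lambda^a\right)$ are absorbed into the $O(1/\sqrt{n})$ correction; and the factor $4^a n^{a/2-1/2}/k$ multiplying $\mathbf{E}[T^{(a,2)}_{(AS)}]$ is rewritten as $4^{a-1}n^{a/2}\cdot\frac{4}{k\sqrt{n}}$, so that everything collapses to the right-hand side of (\ref{eq:fin200xyz}).

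The hard part is purely bookkeeping: one must verify the algebraic identity $2\cdot 6^a4^{a-1}=2^{3a-1}3^a$, check that the exponent $a/2-1/2$ in (\ref{eq:chojnik}) is precisely what turns the $T^{(a,2)}_{(AS)}/k$ occurring there into the $T^{(a,2)}_{(AS)}/(k\sqrt{n})$ of the statement, and confirm that every term not singled out as leading is $O(n^{a/2}/\lambda^a)=o(n^a/\lambda^a)$ and hence harmlessly swept into the $O(1/\sqrt{n})$ factor. No analytic difficulty arises beyond Theorem \ref{theorem:robot_plane} and the Gamma-function asymptotics already employed in the proof of Theorem \ref{thm:inspiracja01}.
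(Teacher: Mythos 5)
Your proposal is correct and follows essentially the same route as the paper: the lower bound comes from taking expectations in Corollary \ref{thm:lowerplane} with the Gamma moment formula (\ref{eq:teta3}) and the expansion (\ref{eq:gamma_asymema}), and the upper bound from taking expectations in (\ref{eq:chojnik}) of Theorem \ref{theorem:robot_plane}, using (\ref{eq:icdcna01asek}), (\ref{eq:icdcnabc}) and the bound of the boundary displacements by $2\,\mathbf{E_{\max}}\left[T^{(a,2)}_{(AS)}\right]$. Your explicit bookkeeping (the identity $2\cdot 6^a 4^{a-1}=2^{3a-1}3^a$, the rewriting $4^a n^{a/2-1/2}/k=4^{a-1}n^{a/2}\cdot 4/(k\sqrt{n})$, and the absorption of the $O(n^{a/2}/\lambda^a)$ endpoint terms) just makes explicit what the paper leaves implicit.
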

\begin{proof}
There are two cases to consider

\textit{Case 1: Inequality (\ref{eq:fin200xy})}

Let us recall that the random variable $X_i-X_1$
is the sum of $i-1$ independent and identically distributed exponential random variables with parameter $\lambda$ and
obeys Gamma distribution with parameters $i-1\in\mathbb{N}\setminus\{0\},\lambda>0$ 
(see \cite{kingman,ross2002}). 
Therefore, passing to the expectation in Corollary \ref{thm:lowerplane} and applying Formulas (\ref{eq:teta3}), (\ref{eq:gamma_asymema}) we have the lower bound.

\textit{Case 2: Inequality (\ref{eq:fin200xyz})}

Passing to the expectation in Equation (\ref{eq:chojnik}) 
(see Theorem \ref{theorem:robot_plane}), applying Formulas (\ref{eq:teta3}), (\ref{eq:gamma_asymema}), (\ref{eq:icdcna01asek}), (\ref{eq:icdcnabc}),  as well as the estimations 
$$\mathbf{E}\left[|M_1|^a\right]+\mathbf{E}\left[|N_1|^a\right]+\mathbf{E}\left[|M_n|^a\right]+\mathbf{E}\left[|N_n|^a\right] \le2\mathbf{E_{\max}}\left[T^{(a,2)}_{(AS)}\right]$$
%$\mathbf{E}\left[|M_n|\right]\le\max_{1\le k\le n}\mathbf{E}\left[|M_k|^a\right],$
we get the upper bound. 

It completes the proof of Theorem \ref{thm:inspiracja02}.
\end{proof}
Let us now compare the new result of Theorem \ref{thm:inspiracja02} for the robot assisted movement\\ ($\min_{0\le j_1,j_2\le k-1}T^{(a,2)}_{(j_1,j_2,RS)}$ transportation cost) with the known results for the autonomous sensors displacement
($T^{(a,2)}_{(AS)}$ transportation cost) in Table \ref{tab:dwasak} for $d=2.$ Namely we find 
$$\min\left(\mathbf{E}\left[\min_{0\le j_1,j_2\le k-1}T^{(a,2)}_{(j_1,j_2,RS)}\right], \mathbf{E}\left[T^{(a,2)}_{(AS)}\right]  \right)$$ 
as the function of the robot capacity $k,$ the square sensing radius $r_2$ and the interference distance $s.$

Table \ref{tab:dwas} for $d=2$  displays the described comparison of the energy consumption in \textit{the robot assisted movement} and in \textit{the autonomous sensor displacement}.

\begin{table*}[ht]
%\label{tab:dwa}
\caption {Comparison of the expected transportation cost for the
robot assisted movement  with $k$-capacity  and the autonomous sensor displacement to ensure $(r_2,s)$-coverage and interference problem
for $n$ sensors in the $[0,\infty)^2,$ provided that $\epsilon\ge \delta>0$ are arbitrary small constants independent on $\lambda$ and $n.$ } \label{tab:hugher} 
\begin{center}
\label{tab:dwasasekek}
 %\caption{Nine mobile sensors located in the interrior of a cube move to new positions according to steps $(1-6)$ of Algorithm 1.}
 \begin{tabular}{|c|c|c|c|} 
%   \caption{Nine mobile sensors located in the interrior of a cube move to new positions according to steps $(1-6)$ of Algorithm 1.}
 %\end{tabular}
 \hline
 \begin{tabular}{c} Square \\sensing \\ radius $r_2$ \end{tabular}& \begin{tabular}{c} Interference \\distance $s$\end{tabular}  & 
Movement &  \begin{tabular}{c}Expected  transportation cost\\(Energy consumption)\end{tabular}\\
\hline
 $r_2=\frac{1+\epsilon}{2\lambda}$ & $s=\frac{1-\delta}{\lambda}$   &  $T^{(a,2)}_{(AS)}$\,\,\, if\,\,\, $a\ge 1$ & $O\left(n\right)/\lambda^a$ \\ 
\hline
 $r_2=\frac{1}{2\lambda}$ & $s=\frac{1}{\lambda}$   & $T^{(a,2)}_{(AS)}$\,\,\, if\,\,\, $a> \frac{4}{3}$  &  $O\left(n^{1+\frac{a}{4}}\right)/\lambda^a$\\
   \hline
    $r_2=\frac{1}{2\lambda}$ & $s=\frac{1}{\lambda}$   & $T^{(a,2)}_{(RS)}$\,\,\, if\,\,\, $a\in\left[1,\frac{4}{3}\right]$  &  \begin{tabular}{c}$O\left(n^{\frac{5a}{4}}\right)/(k\lambda)^a\,\,\,$ if $\,\,\,k\in\left[n^{1-\frac{1}{a}}, n^{\frac{1}{4}}\right]$
 \\ $\,\,\,\,\,\, \Theta\left(n^{a}\right)/\lambda^a\,\,\,$ if $\,\,\,k\in\left[n^{\frac{1}{4}}, \sqrt{n}\right]$\end{tabular}\\
   \hline
 $r_2=\frac{1}{2\lambda}$ & $s=\frac{1}{\lambda}$   & $T^{(a,2)}_{(AS)}$\,\,\, if\,\,\, $a> 2$  & $O\left(n^{1+\frac{a}{2}}\right)/\lambda^a$ \\ 
 \hline

 $r_2=\frac{1+\epsilon}{2\lambda}$  & $s=\frac{1+\delta}{\lambda}$    &  $T^{(a,2)}_{(RS)}$\,\,\, if\,\,\, $a\in\left[1,2\right]$ & 
$O\left(n^{\frac{3a}{2}}\right)/(k\lambda)^a\,\,\,$ if $k\in\left[n^{1-\frac{1}{a}}, \sqrt{n}\right]$  \\ 
 \hline
 \end{tabular}
\end{center}
\end{table*}
\section{Discussions}
\label{sec:discussion}
In this study, the mobile robot with carrying capacity $k$ moves the random sensors on the line and on the plane to provide
the desired scheduling task. The energy consumption in robot assisted movement is compared with the energy consumption in autonomous sensor displacement.
As the application to the coverage and interference requirement, we obtained sharp decrease in the expected transportation cost (see Table \ref{tab:dwasak} and Table \ref{tab:dwas}).

\subsection{Unreliable Sensors}
In the analysis of the coverage and interference requirement, the sensors were assumed to be reliable, i.e., they sense and communicate correctly.
However, the proposed approach in Sections \ref{sec:line} and \ref{sec:plane} is not limited only to the reliable sensors. 
In fact, Theorem \ref{theorem:robot} and Theorem \ref{theorem:robot_plane} also hold for unreliable sensors. 
Moreover, when the sensors can not move, the robot assisted displacement is the only option.

\subsection{Sensors in the Higher Dimensions}
The proposed theory for the robot assisted movement on the plane can be extended to the higher dimensions.
Fix $d\in \mathbf{N}\setminus\{0,1,2\}.$ Assume that $n$ sensors are initially randomly deployed
on the $[0,\infty)^d$ according to $d$ independent and general random processes and $n=m^d$ for
some $m\in\mathbf{N}.$
Let us recall that the mobile robot in Algorithm \ref{robot_assisted_plane} moves the sensors only in vertical
and horizontal fashion. Hence, Algorithm \ref{robot_assisted_plane} can be extended to the sensors on the $[0,\infty)^d$
and the robot's movement to the movement only according to the axes. In particular, the results in Table \ref{tab:dwasasekek} can be extended valid to the sensors  on the $[0,\infty)^d$
%It is not difficult to see that similar results to these on the plane may be obtained in higher dimensions.
\subsection{Many Robots}
Here we discuss a greedy approach to many robots assisted displacement.
Fix $p\in\mathrm{N}.$ Let $k_j\in\mathrm{N}$ and $1\le k_j\le n$ for $j=1,2,\dots, p$. Assume that $j$-th robot can carry at most $k_j$ sensors 
and deposit all or part of them at any time and to any suitable position it chooses.
The carrying capacity  of the system of $p$ robots is defined as $\sum_{j=1}^{p}k_j.$
Algorithm \ref{alg_anchor} (Greedy Procedure) can be modified  for the system
of $p$ robots. Let $k:=\sum_{j=1}^{p}k_j$ and $l\le \sum_{j=1}^{p}k_j$ in Algorithm \ref{alg_anchor}. Assume that all $p$ robots move
together, collect the sensors $Y_1\le Y_2\le\dots \le Y_l$ and displace at the final positions. Then,
it is possible to develop the theory similar to that presented in this paper. In particular, in modified Table \ref{tab:dwas} and Table \ref{tab:dwasasekek} for the system of $p$ robots, the parameter $k$ is the arithmetic mean
of capacities $k_1,k_2,\dots, k_p$, i.e., $k=\frac{1}{p}\sum_{j=1}^{p}k_j$.
%\subsection{General Scenario in Two-dimensional Space}
%
%TO DO look at others trajectories reduction to one dimension.
%
\section{Numerical Results}
\label{sec:experiments}
In this section we use a set of experiments to show how the robot capacity $k$ impacts the transportation cost for the coverage and interference
requirement when the $n$ sensors are randomly placed on the line according to Poisson process with arrival rate $\lambda=n$ and on the plane according 
to two identical and independent Poisson processes each with arrival rate $\lambda=\sqrt{n}.$
Namely, we implement Algorithm \ref{robot_assisted} for $a=1,$ $j=0$ and $k\in\left\{1,\lceil\sqrt{n}\rceil, n\right\};$ 
Algorithm \ref{robot_assisted} for $a=2,$ $j=0$ and $k\in\left\{\lceil\sqrt{n}\rceil, n\right\};$
Algorithm \ref{robot_assisted_plane} for $a=1,$ $j_1=j_2=0$ and $k\in\left\{1,\sqrt{n}\right\}.$
%\textcolor{red}{Implementation of Algorithms is not same as simulation.}

\subsection{Evaluation of Algorithm \ref{robot_assisted} for $a=1$ and $j=0$}
\begin{table*}
\begin{figure}[H]
  \begin{center}
\begin{tabular}{cc}
 \begin{minipage}[b]{0.45\textwidth} \centering \includegraphics[width=1.00\textwidth]{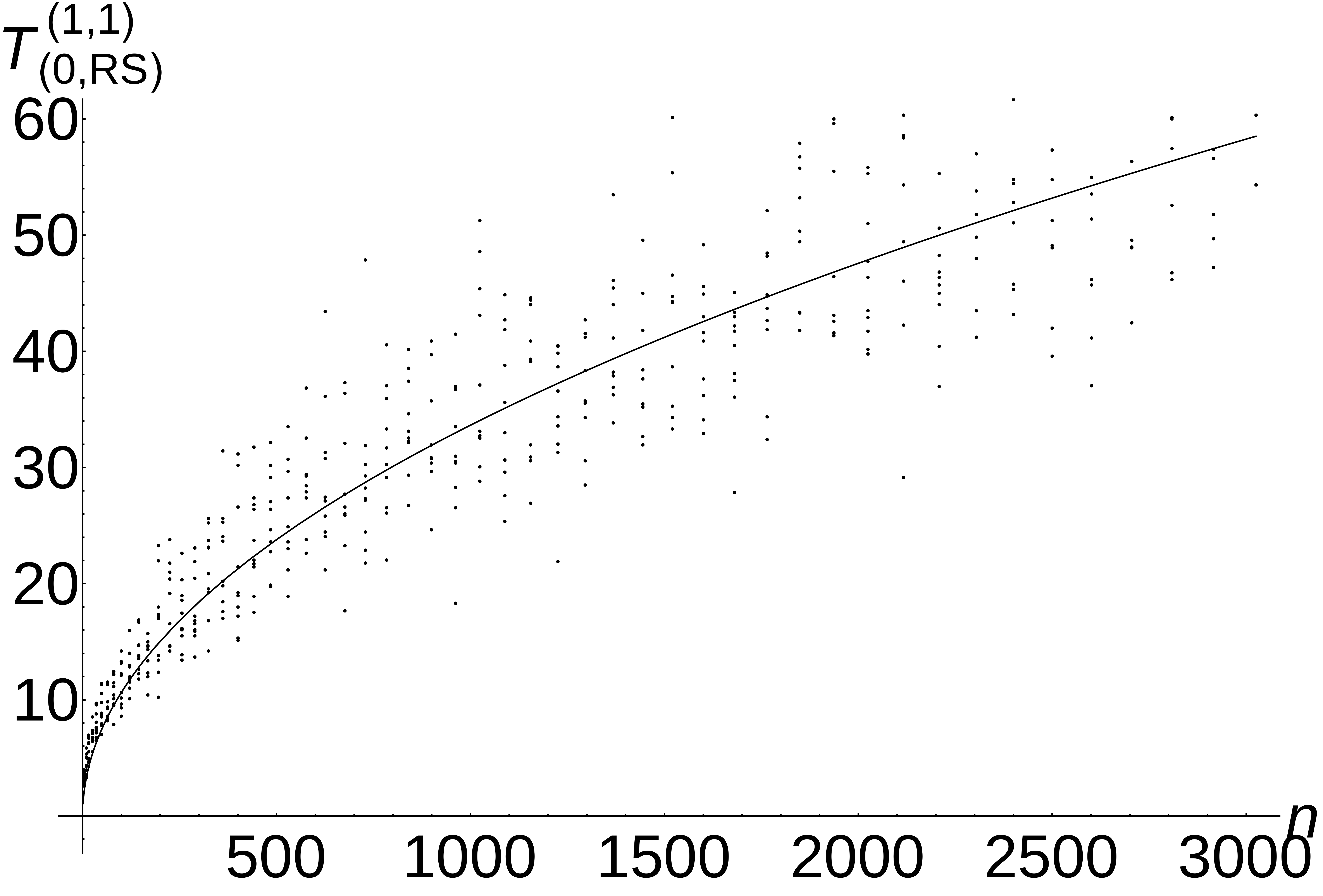}\\  $T^{(1,1)}_{(0,RS)}=\frac{\sqrt{2}}{\Gamma\left(\frac{5}{2}\right)}\sqrt{n}$
 \vspace{-5pt}
 %\caption{ The maximum of the expected sensor's displacements to the power $1$ of Algorithm \ref{alg_anchor_biga}}
  \caption{The expected $T^{(1,1)}_{(0,RS)}$ transportation cost of Algorithm $GM_{0}(1,n)$ for $M_i=\frac{i}{n}-\frac{1}{2n}$ where $i=1,2,\dots,n$ and $k=1$ }\label{b:1}
 \label{fig.one}
 \end{minipage}
%\caption{ The maximum of the expected sensor's displacements of (a) Algorithm \ref{alg_anchor_biga}, (b) Algorithm \ref{alg_anchor}.}
 &
 \begin{minipage}[b]{0.45\textwidth} \centering \includegraphics[width=1.00\textwidth]{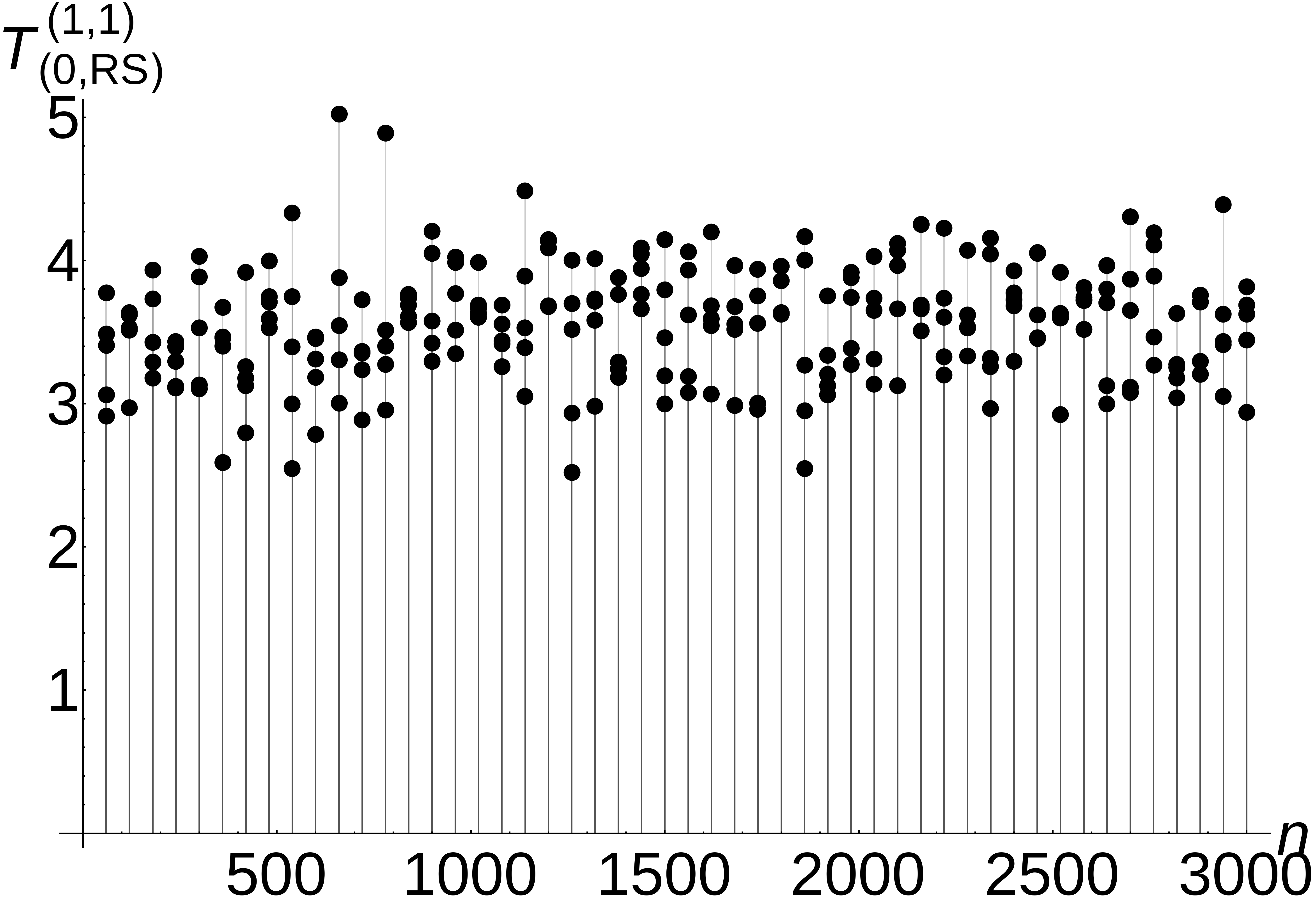}\\   $T^{(1,1)}_{(0,RS)}=\Theta(1)$
 \vspace{-5pt}
  %\caption{ The maximum of the expected sensor's displacements to the power $1$ of Algorithm \ref{alg_anchor}}
   \caption{The expected $T^{(1,1)}_{(0,RS)}$ transportation cost of Algorithm $GM_{0}\left(\lceil\sqrt{n}\rceil,n\right)$ for $X_i+M_i=\frac{i}{n}-\frac{1}{2n},$ where $i=1,2,\dots,n$ and $k=\lceil\sqrt{n}\rceil$}\label{b:2}
  \label{fig.two}
 \end{minipage}
  \\
\end{tabular}
\end{center}
\end{figure}
    %\label{fig:alg1}
   %\vspace{-5pt} 
\end{table*}
\begin{table*}
\begin{figure}[H]
  \begin{center}
\begin{tabular}{cc}
 \begin{minipage}[b]{0.45\textwidth} \centering \includegraphics[width=1.00\textwidth]{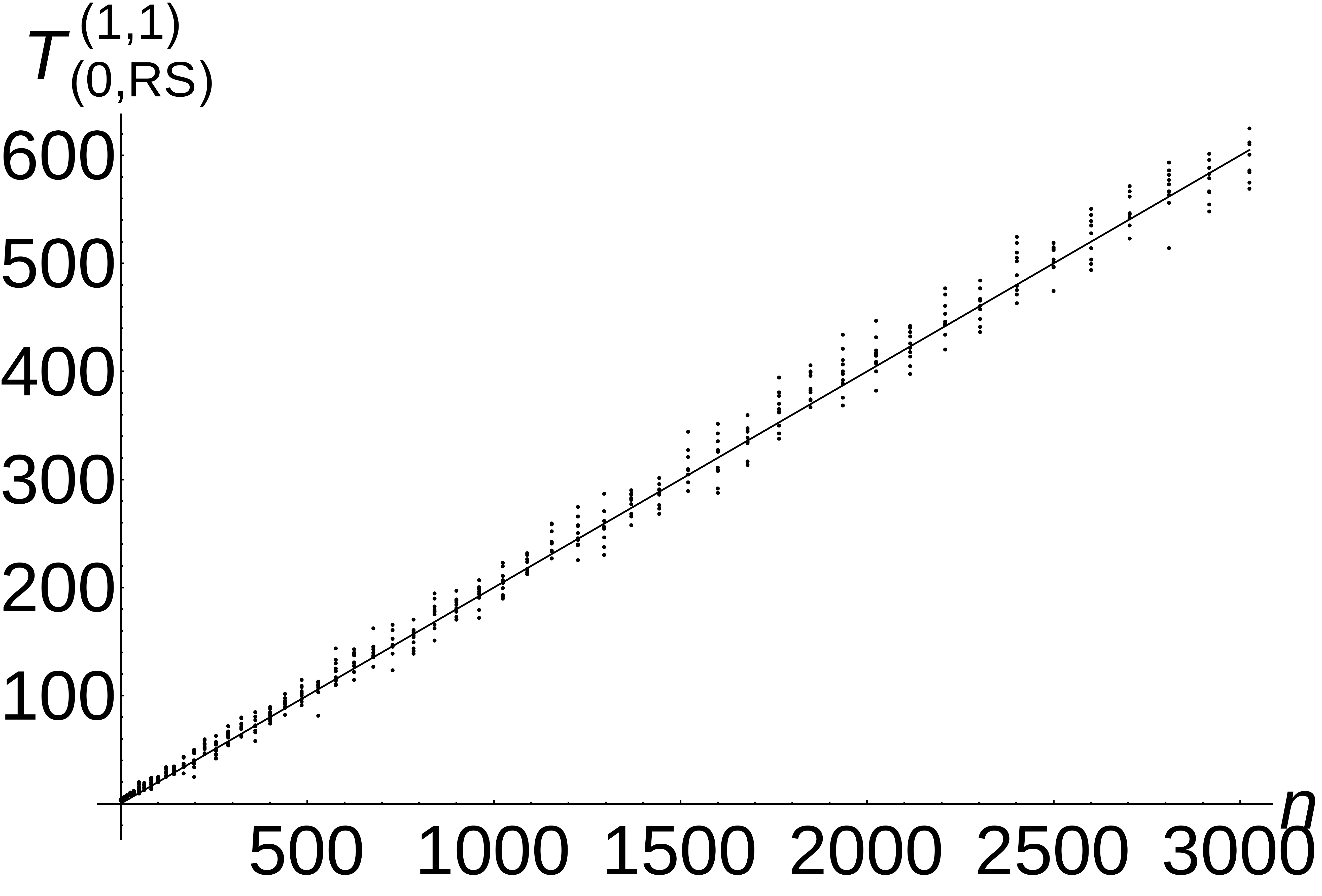}\\  $T^{(1,1)}_{(0,RS)}=\frac{1}{5}n$
 \vspace{-5pt}
 %\caption{ The maximum of the expected sensor's displacements to the power $1$ of Algorithm \ref{alg_anchor_biga}}
  %\caption{The expected $T^{(1,1)}_{(0,RS)}$ transportation cost of Algorithm $GM_{0}(1,n)$ for $M_i=\frac{i}{n}-\frac{1}{2n}$ where $i=1,2,\dots,n$ and $k=1$ }\label{b:1}
   \caption{The expected $T^{(1,1)}_{(0,RS)}$ transportation cost of Algorithm $GM_{0}(1,n)$ for $X_i+M_i=\left(1+\frac{1}{5}\right)\left(\frac{i}{n}-\frac{1}{2n}\right),$ where $i=1,2,\dots,n$ and $k=1$ }\label{b:3}
 \label{fig.three}
 \end{minipage}
%\caption{ The maximum of the expected sensor's displacements of (a) Algorithm \ref{alg_anchor_biga}, (b) Algorithm \ref{alg_anchor}.}
 &
 \begin{minipage}[b]{0.45\textwidth} \centering \includegraphics[width=1.00\textwidth]{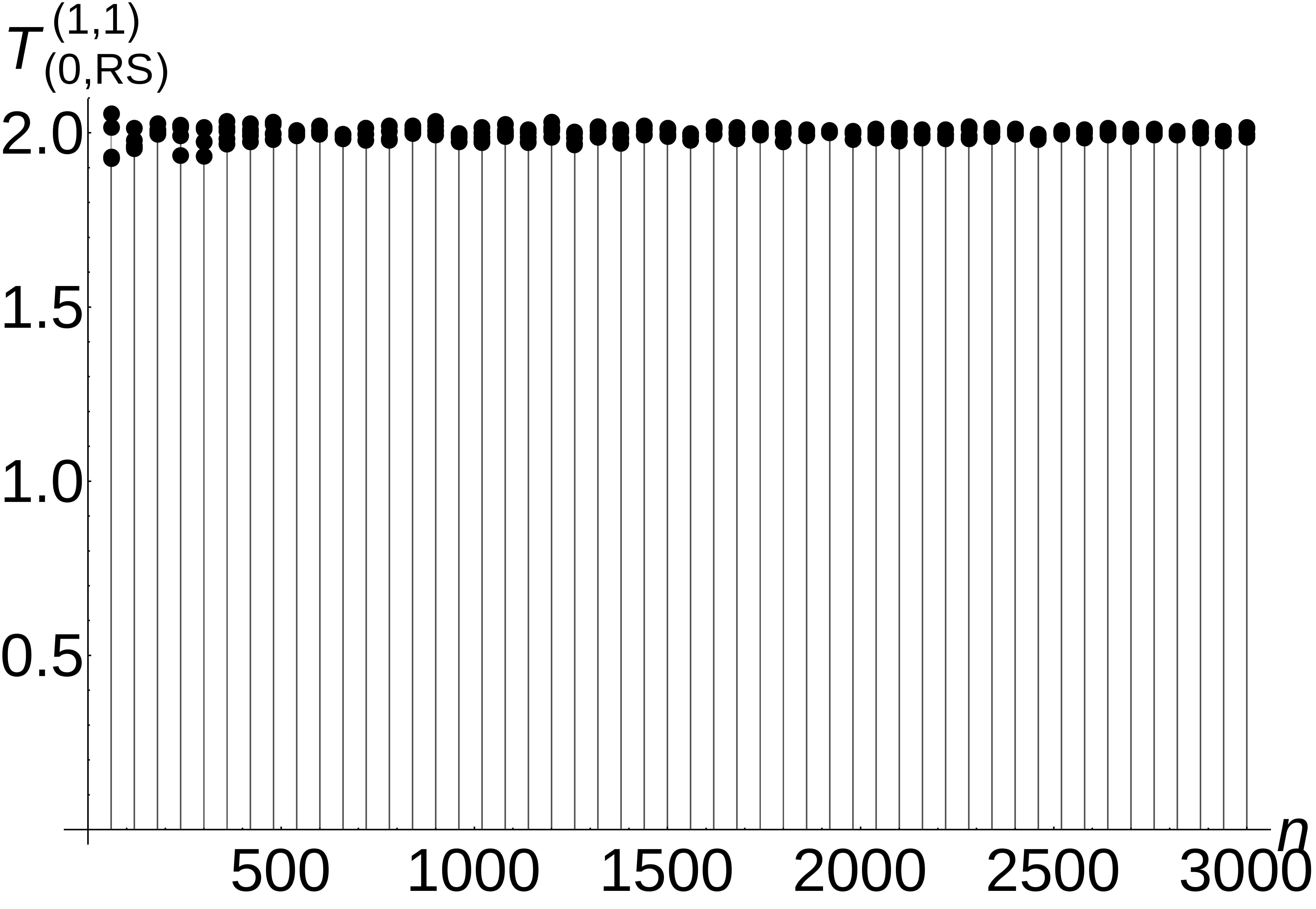}\\   $T^{(1,1)}_{(0,RS)}=\Theta(1)$
 \vspace{-5pt}
  %\caption{ The maximum of the expected sensor's displacements to the power $1$ of Algorithm \ref{alg_anchor}}
   %\caption{The expected $T^{(1,1)}_{(0,RS)}$ transportation cost of Algorithm $GM_{0}\left(\lceil\sqrt{n}\rceil,n\right)$ for $X_i+M_i=\frac{i}{n}-\frac{1}{2n},$ where $i=1,2,\dots,n$ and $k=\lceil\sqrt{n}\rceil$}\label{b:2}
    \caption{The expected $T^{(1,1)}_{(0,RS)}$ transportation cost of Algorithm $GM_{0}(n,n)$ for $X_i+M_i=\left(1+\frac{1}{5}\right)\left(\frac{i}{n}-\frac{1}{2n}\right),$ where $i=1,2,\dots,n$ and $k=n$ }\label{b:4} 
  \label{fig.four}
 \end{minipage}
  \\
\end{tabular}
\end{center}
\end{figure}
    %\label{fig:alg1}
\end{table*}

%We can apply the similar arguments to that as in the proof of Theorem 2 and Theorem 3 in \cite{ICDCN2020kapelko} and get exact asymptotic
%Evaluate 
For the case of Algorithm $GM_0(1,n)$ when the robot capacity $k=1$ (Algorithm \ref{robot_assisted} for $j=0$ and $k=1$) we conduct Algorithm \ref{a3}.
%\vspace{3cm}
\begin{algorithm}[H]
%\begin{framed}
\caption{}
\label{a3}
\begin{algorithmic}[1]
\STATE{$n:=1$}
\WHILE{n $\le$ 55}
 \FOR{$l=1$  \TO $100$ } 
% \STATE{ 
 %\FOR{$i=1$  \TO $\sqrt{n}$ } 
 \STATE{Generate $X_1(l), X_2(l), \dots, X_{n^2}(l)$ random points on the $[0,\infty)$ such that $\forall_{i\in\{1,2,\dots, n^2 \}}\,\, X_i(l)$ is the sum of 
 $i$ independent and identically distributed exponential random variables with parameter $\lambda=n^2;$}
 \STATE{Calculate $T_{1,n^2}(l)$ the distance travelled by the robot\\ in Algorithm \ref{robot_assisted} for $j=0,$ and $k=1;$}
 %\ENDFOR
% } 
 \ENDFOR
  \FOR{$p=1$  \TO $10$ } 
% \STATE{ 
 %\FOR{$i=1$  \TO $\sqrt{n}$ } 
 %\STATE{Generate $k$ random sensors according to Algorithm \ref{alg_anchor}}
 \STATE{Calculate  the average\\ $T_{1,n^2}=\frac{1}{5}\sum_{q=1}^{5} T_{1,n^2}(q+(p-1)*10);$}
 \STATE{Insert the points $(n^2, T_{1,n^2})$ into the chart;}
 %\ENDFOR
% } 
 \ENDFOR
 \STATE{$n:=n+1$}
 \ENDWHILE
\end{algorithmic}
%\end{framed}
\end{algorithm}

In Figures \ref{b:1} and \ref{b:3} the black dots represent the numerical results of conducted Algorithm \ref{a3} for $X_i+M_i=\frac{i}{n}-\frac{1}{2n}$
and $X_i+M_i=\left(1+\frac{1}{5}\right)\left(\frac{i}{n}-\frac{1}{2n}\right)$ where $i=1,2,\dots, n.$ The additional lines\\ $\left\{(n,\frac{\sqrt{2}}{\Gamma(5/2)}\sqrt{n}), 1\le n\le 55^2\right\},$
$\left\{(n,\frac{1}{5}n), 1\le n\le 55^2\right\},$
are the leading terms in the theoretical estimation of the distance travelled by the robot
in Algorithm $GM_0(1,n)$ for $k=1$ when  $X_i+M_i=\frac{i}{n}-\frac{1}{2n}$
and $X_i+M_i=\left(1+\frac{1}{5}\right)\left(\frac{i}{n}-\frac{1}{2n}\right)$ provided $i=1,2,\dots, n.$

\begin{algorithm}[H]
\caption{}
\label{a4}
%\begin{framed}
\begin{algorithmic}[1]
\STATE{$n:=1$}
\WHILE{n $\le$ 50}
 \FOR{$l=1$  \TO $50$ } 
% \STATE{ 
 %\FOR{$i=1$  \TO $\sqrt{n}$ } 
 \STATE{Generate $X_1(l), X_2(l),\dots,X_{60n}(l)$ random points on the $[0,\infty)$ such that $\forall_{i\in\{1,2,\dots, n^2 \}}\,\, X_i(l)$ is the sum of 
 $i$ independent and identically distributed exponential random variables with parameter $\lambda=n^2;$}

 \STATE{Calculate $\mathbf{V_{a, 60n}}(l)$ the distance to the power $a$ travelled by the robot in Algorithm \ref{robot_assisted} for $j=0,$ and $k;$}
 %\ENDFOR
% } 
 \ENDFOR
  \FOR{$p=1$  \TO $5$ } 
% \STATE{ 
 %\FOR{$i=1$  \TO $\sqrt{n}$ } 
 %\STATE{Generate $k$ random sensors according to Algorithm \ref{alg_anchor}}
 \STATE{Calculate  the average\\  $\mathbf{V_{a, 60n}}=\frac{1}{10}\sum_{v=1}^{10} \mathbf{V_{a, 60n}}(v+(p-1)10);$}
 \STATE{Insert the points $\mathbf{V_{a}}(60n)$ into the chart;}
 %\ENDFOR
% } 
 \ENDFOR
 \STATE{$n:=n+1$}
 \ENDWHILE
\end{algorithmic}
\end{algorithm}

Figures \ref{b:2} and \ref{b:4} depict the experimental distance travelled by the robot in Algorithm \ref{robot_assisted} considering the parameters
$j=0,$ $k=\lceil\sqrt{n}\rceil,$ $X_i+M_i=\frac{i}{n}-\frac{1}{2n},$  and $j=0,$ $k=n,$ $X_i+M_i=\left(1+\frac{1}{5}\right)\left(\frac{i}{n}-\frac{1}{2n}\right).$
In this case we conduct Algorithm \ref{a4} considering the parameters $a=1,$ $k=\lceil\sqrt{n}\rceil;$ $a=1,$ $k=n.$ 
%for the number of sensors $n\in\{1,2,3,\dots,3025\}.$

The experimental distance travelled by the robot in Figures \ref{b:2} and \ref{b:4} is in $\Theta(1).$ Hence, the carried out experiments
%\textcolor{red}{Again, is it really simulation?}
confirm very well the obtained theoretical upper bound $O(1)$ (see Table \ref{tab:dwas} for $a=1,$ $\lambda=n,$ $r_1=\frac{1}{2n},$ $s=\frac{1}{n},$
$k=\lceil\sqrt{n}\rceil$ and for $a=1,$ $\lambda=n,$ $r_1=\frac{1+\frac{1}{5}}{2n},$ $s=\frac{1+\frac{1}{5}}{n},$
$k=n$).

\subsection{Evaluation of Algorithm \ref{robot_assisted} for $a=2$ and $j=0$}
\begin{table*}
\begin{figure}[H]
  \begin{center}
\begin{tabular}{cc}
 \begin{minipage}[b]{0.45\textwidth} \centering \includegraphics[width=1.00\textwidth]{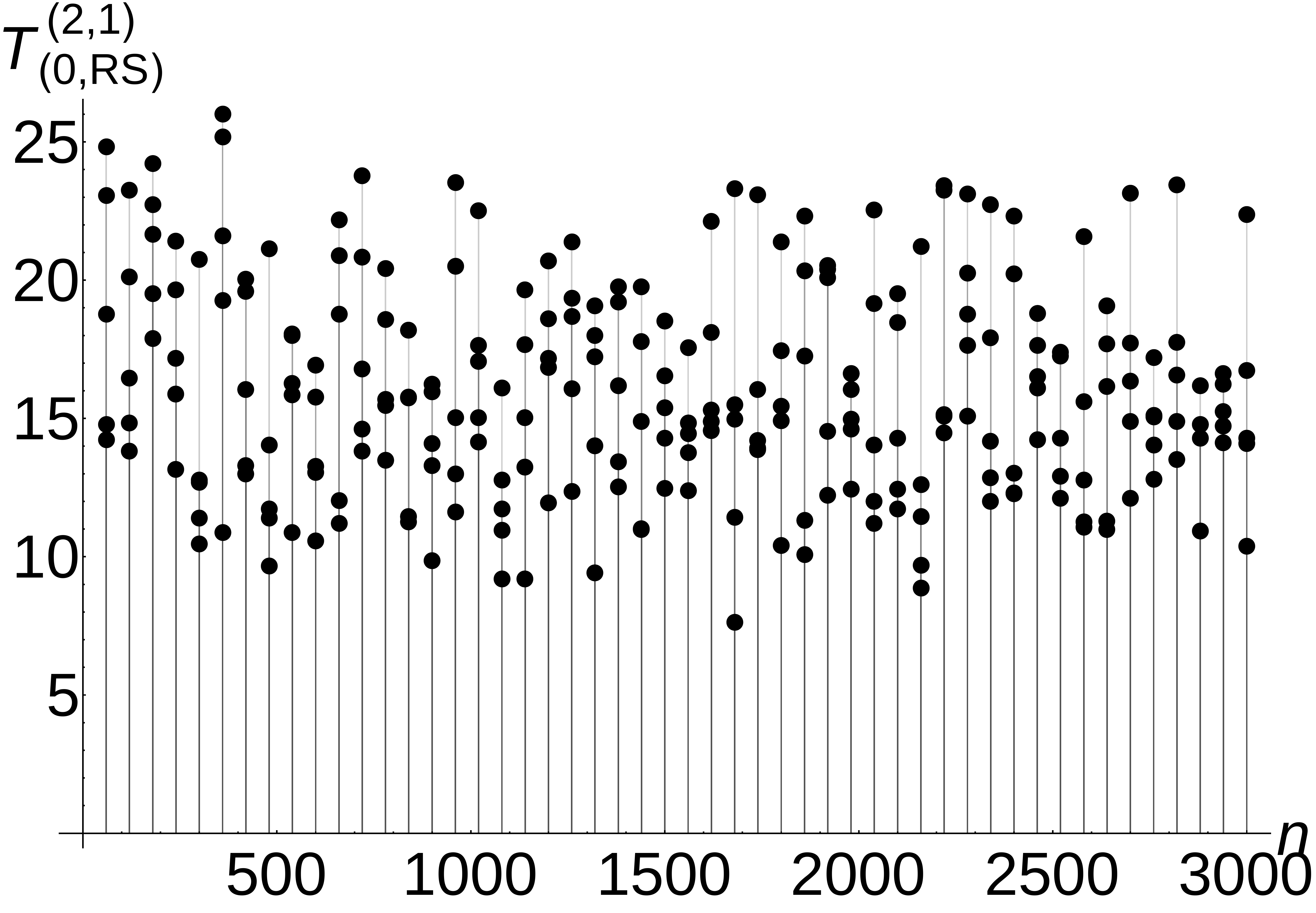}\\  $T^{(2,1)}_{(0,RS)}=\Theta(1)$
 \vspace{-5pt}
 \caption{The expected $T^{(2,1)}_{(0,RS)}$ transportation cost of Algorithm $GM_{0}\left(\lceil\sqrt{n}\rceil,n\right)$ for $X_i+M_i=\frac{i}{n}-\frac{1}{2n},$ where $i=1,2,\dots,n$ and $k=\lceil\sqrt{n}\rceil$}\label{b:3fgh}
 \label{fig.two_power}
 \end{minipage}
%\caption{ The maximum of the expected sensor's displacements of (a) Algorithm \ref{alg_anchor_biga}, (b) Algorithm \ref{alg_anchor}.}
 &
 \begin{minipage}[b]{0.45\textwidth} \centering \includegraphics[width=1.00\textwidth]{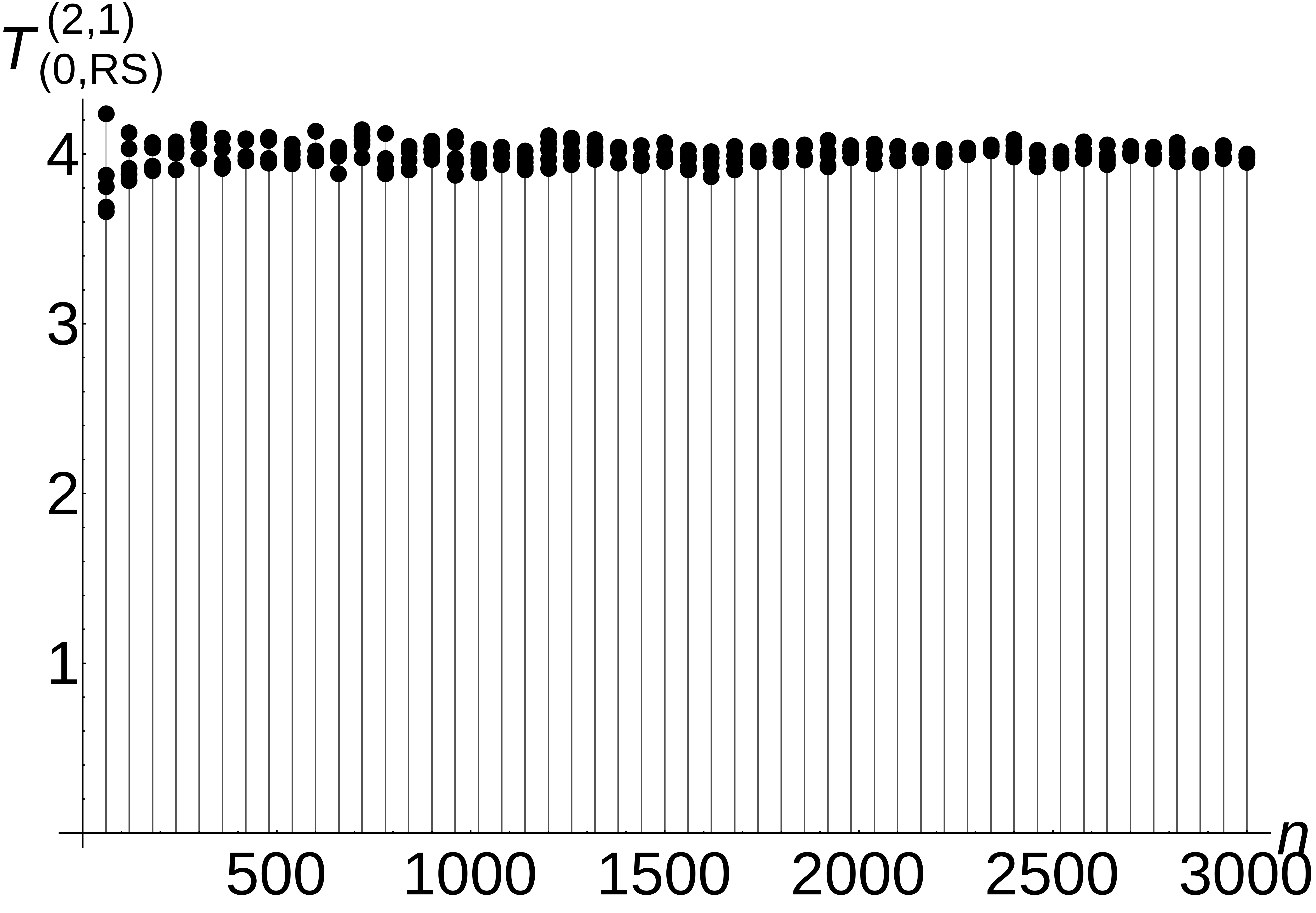}\\   $T^{(2,1)}_{(0,RS)}=\Theta(1)$
 \vspace{-5pt}
   \caption{The expected $T^{(2,1)}_{(0,RS)}$ transportation cost of Algorithm $GM_{0}(1,n)$ for $X_i+M_i=\left(1+\frac{1}{5}\right)\left(\frac{i}{n}-\frac{1}{2n}\right),$ where $i=1,2,\dots,n$ and $k=n$ }\label{b:3fd}
  \label{fig.four_power}
 \end{minipage}
  \\
\end{tabular}
\end{center}
\end{figure}
    %\label{fig:alg1}
\end{table*}
We evaluate the distance to the power $2$ travelled by the robot in Algorithm $GM_{0}(k,n)$ when the robot capacity $k=\lceil\sqrt{n}\rceil$ or $k=n$ (Algorithm \ref{robot_assisted}
for $j=0$ and $k=\lceil\sqrt{n}\rceil,  n$).

Figures \ref{b:3fgh} and \ref{b:3fd} depict the experimental distance to the power $2$ travelled by the robot in Algorithm \ref{robot_assisted} considering the parameters
$j=0,$ $k=\lceil\sqrt{n}\rceil,$ $X_i+M_i=\frac{i}{n}-\frac{1}{2n},$  and $j=0,$ $k=n,$ $X_i+M_i=\left(1+\frac{1}{5}\right)\left(\frac{i}{n}-\frac{1}{2n}\right).$
In this case we conduct Algorithm \ref{a4} considering the parameters $a=2,$ $k=\lceil\sqrt{n}\rceil;$ $a=2,$ $k=n.$ 

The experimental distance to the power $2$ travelled by the robot in Figures \ref{b:3fgh} and \ref{b:3fd} is in $\Theta(1).$ Hence, the carried out experiments
%\textcolor{red}{Again, is it really simulation?}
confirm very well the obtained theoretical upper bound $O(1)$ (see Table \ref{tab:dwas} for $a=2,$ $\lambda=n,$ $r_1=\frac{1}{2n},$ $s=\frac{1}{n},$
$k=1,\lceil\sqrt{n}\rceil$ and for $a=2,$ $\lambda=n,$ $r_1=\frac{1+\frac{1}{5}}{2n},$ $s=\frac{1+\frac{1}{5}}{n},$

\subsection{Evaluation of Algorithm \ref{robot_assisted_plane} for $a=1$ and $j_1=j_2=0$}
For the case of Algorithm $GM_{0,0}(k,n)$ when the robot capacity $k=1$ or $k=\sqrt{n}$ (Algorithm \ref{robot_assisted_plane}
for $j_1=j_2=0$ and $k=1,\sqrt{n}$),
we conduct Algorithm \ref{a5} for $k=1$ and $k=\sqrt{n}.$
\vspace{3cm}
\begin{algorithm}[H]
%\begin{framed}
\caption{}
\label{a5}
%\begin{framed}
\begin{algorithmic}[1]
\STATE{$n:=1$}
\WHILE{n $\le$ 55}
 \FOR{$l=1$  \TO $100$ } 
% \STATE{ 
 %\FOR{$i=1$  \TO $\sqrt{n}$ } 
 \STATE{Generate $n^2$ random points on the $[0,\infty)\times[0,\infty)$ 
 according to two identical and independent Poisson processes $X_i$ and $Y_i,$
for $i=1,2,\dots, \sqrt{n^2}$ each with arrival rate $\lambda=\sqrt{n^2}.$ 
The random position of the sensor with identical \textit{square sensing radius} $r_2$ 
on the plane is determined by the pair $(X_{i_1}(l),Y_{i_2}(l)),$ where $i_{1},i_{2}=1,2,\dots \sqrt{n};$} 
 \STATE{Calculate $W_{1,n^2}(l)$ the distance travelled by the robot\\ in Algorithm \ref{robot_assisted_plane} for $j_1=j_2=0$ and $k;$}
 %\ENDFOR
% } 
 \ENDFOR
  \FOR{$p=1$  \TO $10$ } 
% \STATE{ 
 %\FOR{$i=1$  \TO $\sqrt{n}$ } 
 %\STATE{Generate $k$ random sensors according to Algorithm \ref{alg_anchor}}
 \STATE{Calculate  the average\\ $W_{1,n^2}=\frac{1}{5}\sum_{q=1}^{5} W_{1,n^2}(q+(p-1)*10);$}
 \STATE{Insert the points $(n^2, W_{1,n^2})$ into the chart;}
 %\ENDFOR
% } 
 \ENDFOR
 \STATE{$n:=n+1$}
 \ENDWHILE
\end{algorithmic}
%\end{framed}
\end{algorithm}

\begin{table*}
\begin{figure}[H]
  \begin{center}
\begin{tabular}{cc}
 \begin{minipage}[b]{0.45\textwidth} \centering \includegraphics[width=1.00\textwidth]{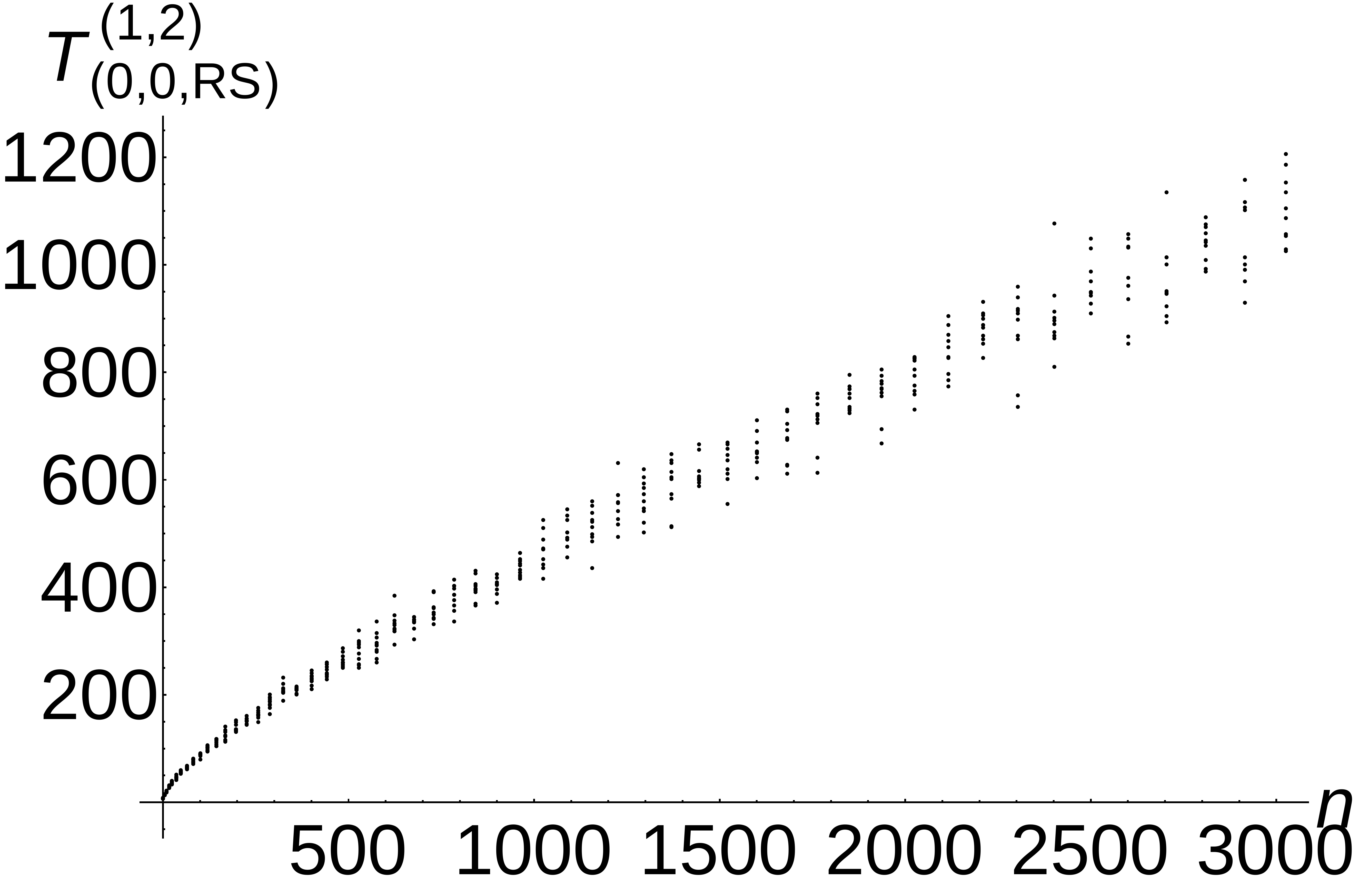}\\   $T^{(1,2)}_{(0,0,RS)}=\Theta(n)$
 \vspace{-5pt}
 %\caption{The expected $T^{(2,1)}_{(0,RS)}$ transportation cost of Algorithm $GM_{0}\left(\lceil\sqrt{n}\rceil,n\right)$ for $X_i+M_i=\frac{i}{n}-\frac{1}{2n},$ where $i=1,2,\dots,n$ and $k=\lceil\sqrt{n}\rceil$}\label{b:3fgh}
   \caption{The expected $T^{(1,2)}_{(0,0,RS)}$ transportation cost of Algorithm $GM_{0,0}(1,n)$ for 
  $\left(X_{i_1}+M_{i_1},Y_{i_2}+M_{i_2}\right)=\left(1+\frac{1}{5}\right)\left(\frac{i_1}{\sqrt{n}}-\frac{1}{2\sqrt{n}},\frac{i_2}{\sqrt{n}}-\frac{1}{2\sqrt{n}}\right),$ where $i_1, i_2=1,2,\dots,\sqrt{n}$ and $k=1$ }\label{b:5}
 \label{fig.five}
 \end{minipage}
%\caption{ The maximum of the expected sensor's displacements of (a) Algorithm \ref{alg_anchor_biga}, (b) Algorithm \ref{alg_anchor}.}
 &
 \begin{minipage}[b]{0.45\textwidth} \centering \includegraphics[width=1.00\textwidth]{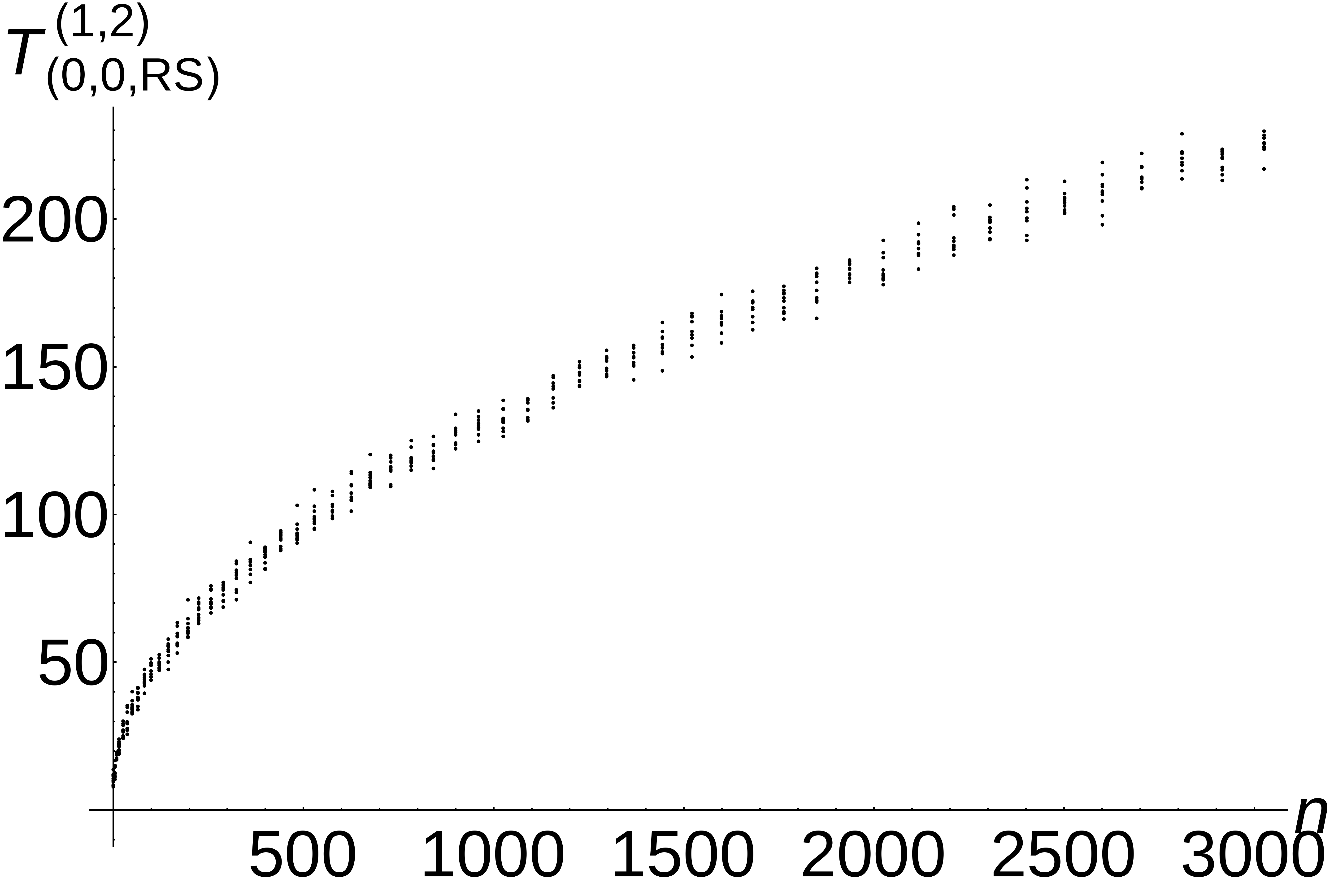}\\   $T^{(1,2)}_{(0,0,RS)}=\Theta(\sqrt{n})$
 \vspace{-5pt}
   %\caption{The expected $T^{(2,1)}_{(0,RS)}$ transportation cost of Algorithm $GM_{0}(1,n)$ for $X_i+M_i=\left(1+\frac{1}{5}\right)\left(\frac{i}{n}-\frac{1}{2n}\right),$ where $i=1,2,\dots,n$ and $k=n$ }\label{b:3fd}
  \caption{The expected $T^{(1,2)}_{(0,0,RS)}$ transportation cost of Algorithm $GM_{0,0}(\sqrt{n},n)$ for 
  $\left(X_{i_1}+M_{i_1},Y_{i_2}+M_{i_2}\right)=\left(1+\frac{1}{5}\right)\left(\frac{i_1}{\sqrt{n}}-\frac{1}{2\sqrt{n}},\frac{i_2}{\sqrt{n}}-\frac{1}{2\sqrt{n}}\right),$ where $i_1, i_2=1,2,\dots,\sqrt{n}$ and $k=\sqrt{n}$ }\label{b:6}  
  \label{fig.six}
 \end{minipage}
  \\
\end{tabular}
\end{center}
\end{figure}
    %\label{fig:alg1}
\end{table*}

In Figures \ref{b:4} and \ref{b:5}, the black dots represent the numerical results of conducted  Algorithm \ref{a5}
for\\
\centerline{$\left(X_{i_1}+M_{i_1},Y_{i_2}+M_{i_2}\right)=\left(1+\frac{1}{5}\right)\left(\frac{i_1}{\sqrt{n}}-\frac{1}{2\sqrt{n}},\frac{i_2}{\sqrt{n}}-\frac{1}{2\sqrt{n}}\right)$}\\
\centerline{where $i_1, i_2=1,2,\dots,\sqrt{n},$ $k=1$ and $k=\sqrt{n}.$}

Notice that, the experimental distance travelled by the robot in Figure \ref{b:4} is in $\Theta(n)$ and in Figure \ref{b:5} is in $\Theta(\sqrt{n}).$ Hence, the carried out experiments
confirm very well the obtained theoretical upper bound $O(n)$ and $O(\sqrt{n})$ (see Table \ref{tab:dwasasekek} for $a=1,$ $\lambda=\sqrt{n},$ $r_2=\frac{1+\frac{1}{5}}{2\sqrt{n}},$ $s=\frac{1+\frac{1}{5}}{\sqrt{n}},$
$k=1$ and for $a=1,$ $\lambda=\sqrt{n},$ $r_2=\frac{1+\frac{1}{5}}{2\sqrt{n}},$ $s=\frac{1+\frac{1}{5}}{\sqrt{n}},$
$k=\sqrt{n}$).

\section{Conclusions}
In this paper we addressed the problem of the robot assisted movement in WSNs in which the sensors are initially
randomly deployed on the line according to general random process, and on the plane according to two independent general random processes.
To this end, we compared the transportation cost for robot assisted movement with the transportation cost for autonomous sensor displacement.
We obtained tradeoffs between the energy consumption in robot's movement, the number of sensors $n$, the sensor range $r$, interference
distance $s$, the robot capacity $k$ until completion of the
coverage and interference scheduling task.
While we have discussed the applicability of our approach to greedy many robots assisted displacement, an open problem for future study is
the system of many robots assisted movement. Additionally, it would be interesting to explore the system of many robots for others trajectories on the plane and in the higher dimension.
\bibliographystyle{plain}
\bibliography{refs}
%\bibliographystyle{ACM-Reference-Format}
%\bibliography{refs} 
\end{document}